\newtheorem{theorem}{Theorem}
\newtheorem{definition}{Definition}
\newtheorem{problem}{Problem}
\newtheorem{lemma}{Lemma}
\journal{Transportation Research: Part C}
\begin{document}

\begin{frontmatter}


\title{Optimization of Multi-Agent Flying Sidekick Traveling Salesman Problem over Road Networks\tnoteref{label1}}
\author[mit]{Ruixiao Yang\corref{cor1}}
\ead{ruixiao@mit.edu}
\author[mit]{Chuchu Fan\corref{cor2}}
\ead{chuchu@mit.edu}
\cortext[cor1]{Corresponding author.}
\affiliation[mit]{organization={Department of Aeronautics and Astronautics, Massachusetts Institute of Technology},
            city={Cambridge},
            postcode={02139},
            state={MA},
            country={USA}}





\begin{abstract}
The mixed truck-drone delivery systems have attracted increasing attention for last-mile logistics, but real-world complexities demand a shift from single-agent, fully connected graph models to multi-agent systems operating on actual road networks. We introduce the multi-agent flying sidekick traveling salesman problem (MA-FSTSP) on road networks, extending the single truck-drone model to multiple trucks, each carrying multiple drones while considering full road networks for truck restrictions and flexible drone routes. We propose a mixed-integer linear programming model and an efficient three-phase heuristic algorithm for this NP-hard problem. Our approach decomposes MA-FSTSP into manageable subproblems of one truck with multiple drones. Then, it computes the routes for trucks without drones in subproblems, which are used in the final phase as heuristics to help optimize drone and truck routes simultaneously. Extensive numerical experiments on Manhattan and Boston road networks demonstrate our algorithm's superior effectiveness and efficiency, significantly outperforming both column generation and variable neighborhood search baselines in solution quality and computation time. Notably, our approach scales to more than 300 customers within a 5-minute time limit, showcasing its potential for large-scale, real-world logistics applications.
\end{abstract}



\begin{keyword}
Traveling salesman problem \sep multiple trucks and drones \sep logistics \sep unmanned aerial vehicle \sep MILP \sep heuristics


\end{keyword}

\end{frontmatter}



\section{Introduction}
\label{sec:introduction}
In the rapidly evolving landscape of aerial robotics, specifically drones, the reduction they can contribute to road traffic through aerial package delivery has drawn much attention. However, due to their limited battery capacity and carrying ability, drones are usually considered to accomplish tasks with the help of a ground vehicle. A common scenario is a delivery system where trucks carry drones to send packages to customers cooperatively. The Flying Sidekick Traveling Salesman Problem (FSTSP)~\cite{murray2015flying}, which is a variant of the famous TSP problem, models the problem of a drone working in tandem with a delivery truck to visit every customer. The problem captures the difficulty of synchronization between the truck and the drone but oversimplifies the path-finding problem between customers for the truck-drone system by an edge in the graph. 

\begin{figure}[t!]
    \centering
    \subfigure[Phase 1: assign customers to truck groups]{
    \includegraphics[width=0.31\linewidth]{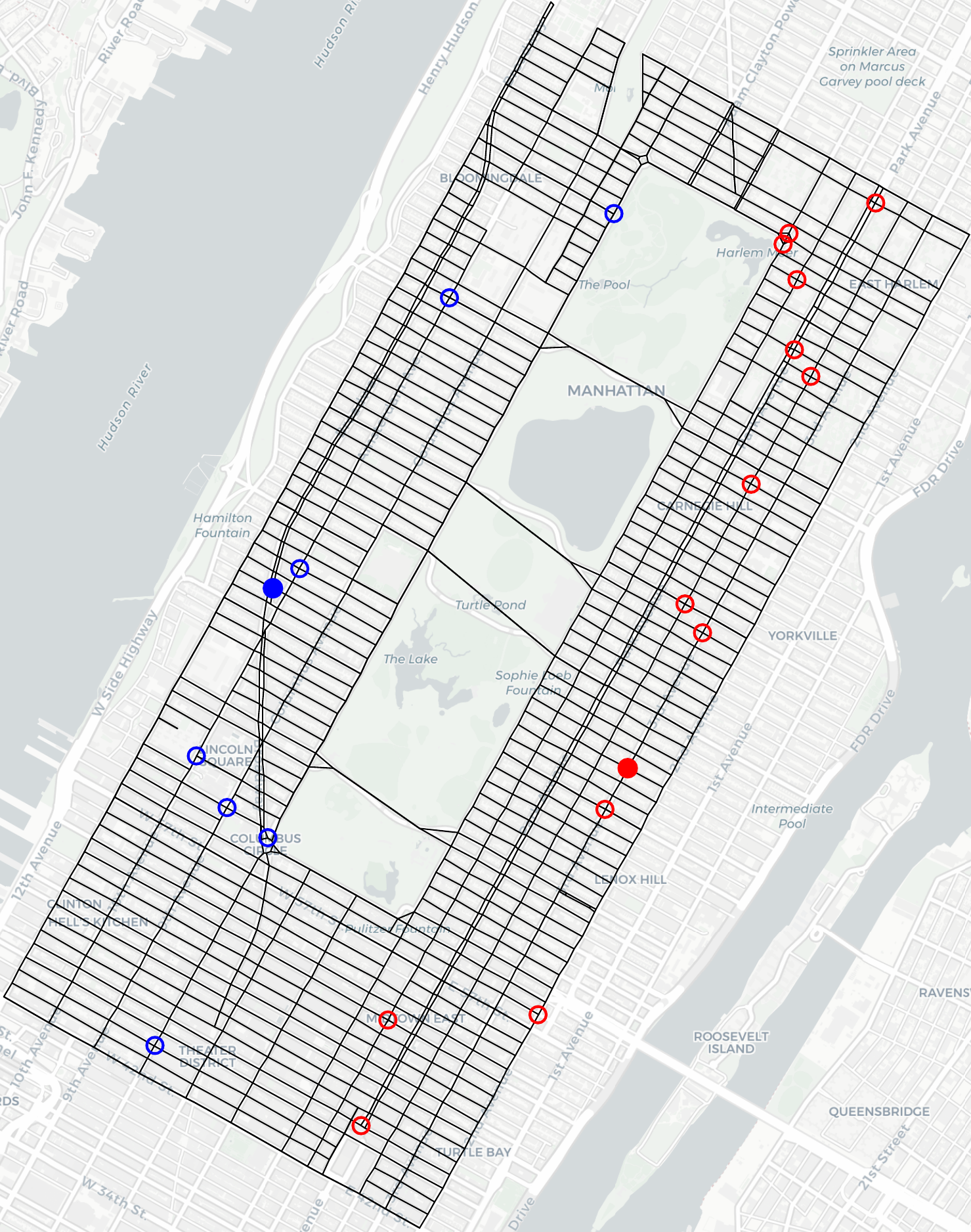}
    \label{fig: alpha manhattan}
    }
    \subfigure[Phase 2: solve the Set-TSP to get visiting orders of customers]{ 
    \includegraphics[width=0.31\linewidth]{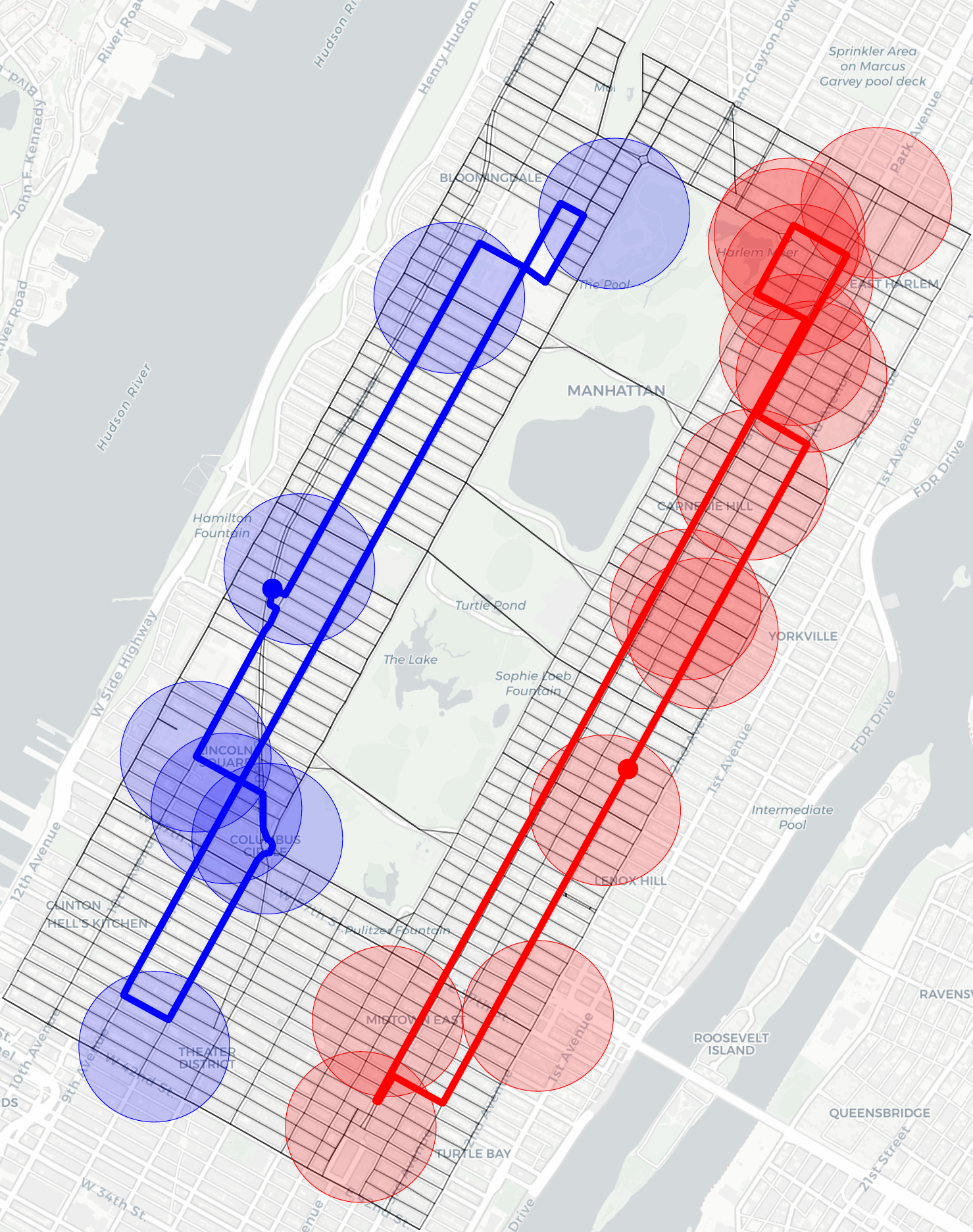}
    \label{fig: alpha boston}
    }
    \subfigure[Phase 3: optimize routes for trucks and drones simultaneously based on visiting orders]{
    \includegraphics[width=0.31\linewidth]{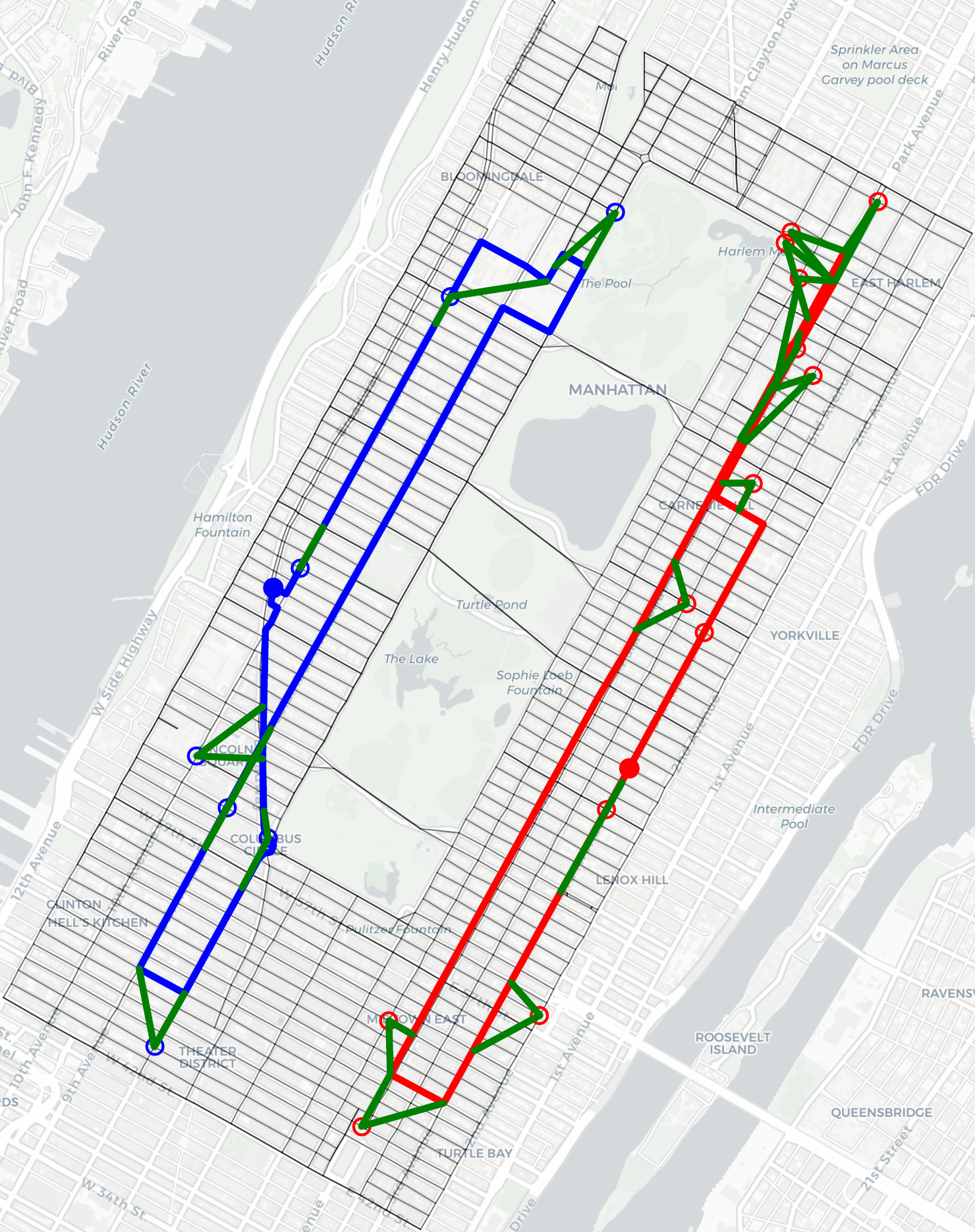}
    }
    \caption{An Example of the output of each phase in the algorithm. In phase 1, we assign customers, marked by hollow circles, to their closest depots, marked by solid circles, as shown in Fig. (a). In phase 2, we collect nodes within half of the distance limit for drones as the circles shown in Fig. (b) and then apply the Set TSP method to find the shortest tour to visit all sets for each truck group starting from its depot as the colored lines shown in Fig. (b). In phase 3, based on the visiting order on routes found in phase 2, we optimize the route for truck and drone together. The final results are shown in Fig. (c), where the red and blue lines are routes for trucks of different groups, and the green lines are routes for drones when they are separated from the truck to visit customers.}
    \label{fig:3 phase overview}
\end{figure}

In this paper, to better capture the truck-drone delivery problem in the real world, we consider a multi-agent version of FSTSP over a road network, termed Multi-Agent FSTSP (MA-FSTSP). The FSTSP considers the TSP problem of one truck loaded with one drone on a fully connected graph, which only consists of one node to depart and to return called \textit{depot} and a set of nodes to visit called \textit{customers}. The \textbf{two differences} in MA-FSTSP setting from FSTSP are: (1) we consider multiple trucks carrying multiple drones instead of one truck with a single drone; (2) we consider the problem on real-world road maps which contain (many) nodes other than depots and customers and hence the graph is no longer fully connected. Those nodes can represent any location or address on the map. In MA-FSTSP, multiple truck groups start from different depots to visit a given set of customers. Each truck group consists of one truck and a fleet of drones, in which the drones are loaded on the truck. Similar to TSP, in MA-FSTSP, each customer must be visited exactly once by one member (a truck or a drone) in one of the truck groups. In addition, the trucks and drones must synchronize at the same node on the graph for the takeoff and land actions. While the trucks are constrained to move on the roads in the road network, the drones can fly freely between any pair of nodes. To model the limited battery capacity and carrying ability in practice, each drone is restricted to a maximum travel distance and a maximum number of customers it can serve before returning to the truck after each separation. Additionally, we only allow drones to land on the same truck from which they take off. Given this setup, the MA-FSTSP asks for the route that minimizes the total time for all truck-drone groups to finish the delivery job (i.e., all customers are visited.) It is worth noting that MA-FSTSP is also NP-hard as the TSP can be reduced to the MA-FSTSP by setting the number of truck groups to 1 and the number of drones in each group to 0.

Literature on multi-agent FSTSP and cooperative Multi-Agent Path Finding (MAPF) on road networks is scarce. To the best of our knowledge, only the methods in \citet{lin2022discrete} and \citet{gao2023scheduling} are designed for FSTSP variants on graphs of road networks and solved them through variants of the Variable Neighborhood Search (VNS) method. The MAPF community, on the other hand, has not yet explored the collaboration between different agents well. Existing works~\cite{choudhury2021efficient, greshler2021cooperative, choudhury2021coordinated} adapted the Conflict-Based Search (CBS) method for the cooperation. However, this search-based method suffers from scalability issues unless a decoupling mechanism is applied to divide the problem into smaller subproblems to solve it state-wise.

In this paper, we propose a novel 3-phase method (see Figure \ref{fig:3 phase overview}) for solving MA-FSTSP that can achieve state-of-the-art performance and good scalability. In the first phase, customers are assigned to the truck group starting from their closest depots under a set-based distance metric. In the second phase, we extend a TSP to a Set TSP by grouping the road nodes within a given distance to the customers to form a set. The Set TSP asks for a route that visits each set exactly once instead of the requirement of visiting each customer exactly once, which is designed to approximate the usage of drones. We adopt the method from \citet{marcucci2021shortest} to solve the Set TSP via a Mixed-Integer Linear Programming (MILP). 
Finally, we plan for trucks and drones simultaneously to find the optimal routes that visit the customers in the same order as the Set TSP routes found in phase 2.

We evaluate our approach using real-world road networks from Manhattan and Boston, comparing it against established baselines from recent literature, including the work of~\citet{gao2023scheduling} and a Hill Climbing combined Variable Neighborhood Search (HC-VNS) method. Our results demonstrate significant improvements in both solution quality and computational efficiency, including a more than 30\% cost reduction to the baseline methods when the problem size is large and a successful scaling up to solve problems involving more than 300 customers. The efficiency of our approach stems from the polynomial complexity of the first and third phases and the quadratic number of binary variables relative to the map and customer sizes in the second phase. We validate the set-based methods by comparing the performance with their node version on Manhattan and observe cost savings of up to $11.84\%$. Through sensitivity analysis, we also derive insights into the factors influencing the performance of the cooperative delivery systems.


The rest of the paper is organized as follows. \autoref{sec:literature} reviews prior works related to multi-agent extensions of FSTSP and path-finding problems on real-world road networks. \autoref{sec:problem} introduces the notations and formally defines the MA-FSTSP. We then describe the MILP model and our method in detail in \autoref{sec:milp} and \autoref{sec:methodology}. Numerical experiments are presented in \autoref{sec:experiments}. We conclude our work and point out potential directions in \autoref{sec:conclusion}.

\section{Related work}
\label{sec:literature}
In practice, especially in drone-assisted delivery tasks, trucks usually start from and end at the same depot, framing the problem as a Traveling Salesman Problem (TSP) or a Vehicle Routing Problem (VRP). This context is crucial as it reflects real-world logistical constraints and optimization needs. The truck-drone delivery system in the delivery problem was first studied by \citet{murray2015flying}, who introduced the Flying Sidekick Traveling Salesman Problem (FSTSP). Their work laid the foundation for optimizing combined truck and drone deliveries, sparking numerous variants of this problem.

One significant research direction has been the generalization of the FS-TSP to multi-agent versions. These multi-agent variants can be classified into three main categories: (1) one truck carrying multiple drones, which expands the drone capacity of a single truck, potentially increasing delivery efficiency (\citet{mbiadou2018iterative, karak2019hybrid, murray2020multiple, cavani2021exact, salama2022collaborative, bruni2022logic}); (2) multiple trucks with one drone each, which focuses on the task allocation among truck-drone systems to consider a wider geographical coverage (\citet{sacramento2019adaptive, chiang2019impact, lu2022multi, luo2022last}); (3) multiple trucks with multiple drones each, which optimizes the previous two models simultaneously (\citet{kitjacharoenchai2019multiple, tamke2021branch, chen2021adaptive, gao2023multi}). Each variant builds upon the original FSTSP model, addressing different practical scenarios and optimization challenges in truck-drone delivery systems. Given the computational complexity of solving these multi-agent FSTSP variants exactly using mixed-integer linear programming (MILP), researchers have developed many heuristics and metaheuristics to approximate optimal solutions efficiently. For the single-truck, multi-drone variant, approaches include the Clark and Wright heuristic \citet{karak2019hybrid}, MILP models combined with three-phased heuristics \citet{murray2020multiple}, and a two-phase method integrating simulated annealing (SA) and variable neighborhood search (VNS) \citet{salama2022collaborative}. In the multi-truck, single-drone category, methods range from adaptive large neighborhood search \citet{sacramento2019adaptive} and genetic algorithms \citet{chiang2019impact} to evolutionary algorithms with specialized local search operators \citet{lu2022multi} and iterated local search \citet{luo2022last}. For the most complex variant with multiple trucks and multiple drones, researchers have employed adaptive insertion algorithms \citet{kitjacharoenchai2019multiple}, adaptive large neighborhood search heuristics \citet{chen2021adaptive}, and column generation methods \citet{gao2023multi}. This diversity of approaches underscores both the challenge of solving these problems and the ongoing efforts to develop efficient approximation methods that balance solution quality with computational tractability.

While the multi-agent variants significantly expanded the scope of the FSTSP, most early models shared a common limitation in their representation of the operational environment. These models typically consider a fully connected graph consisting of only customers and depots. However, this simplification does not adequately model the complexity of real-world environments, particularly road networks, which can significantly impact routing decisions and overall system efficiency. 
Recognizing this limitation, \citet{carlsson2018coordinated} made a significant advancement by proposing that drones should be able to take off from any node in the 2D plain, not just depots and customer nodes. Such an extension enlarges the solution space too much, as the truck cannot move freely in practice. \citet{li2022truck} restricted the flexibility by allowing drones to synchronize only with trucks on arcs between customers rather than arbitrary locations. Both works must deal with the nonlinear term introduced with the continuous synchronization location space and suffer from the high computational cost. Recent works present a more efficient model. \citet{lin2022discrete} and \citet{gao2023scheduling} extended the problem to incorporate actual road networks, allowing drones to take off and land at arbitrary nodes within the network. This approach provides a much more realistic model of the operational environment, potentially leading to more practical and efficient delivery strategies. Both works still exhibit poor scalability, limiting their applicability to larger, real-world instances. 

One relevant research problem, the Multi-Agent Path Finding (MAPF), also studies the synchronization challenge as in FSTSP and its variants. For instance, \citet{greshler2021cooperative} considered the pairwise collaboration constraint, which requires two agents to meet at a specific time and place. In the context of drone deliveries, \citet{choudhury2021efficient} optimized drones that utilize public transit to save energy, while \citet{choudhury2021coordinated} simultaneously optimized truck and drone routes in a coordinated system without the TSP constraints. These developments in MAPF provide valuable insights and potential solution approaches for the complex routing and coordination challenges in multi-agent truck-drone delivery problems.

\section{Problem description}
\label{sec:problem}
\begin{figure}
    \centering
    \subfigure[Truck Route (red)]{
    \includegraphics[width=0.46\linewidth]{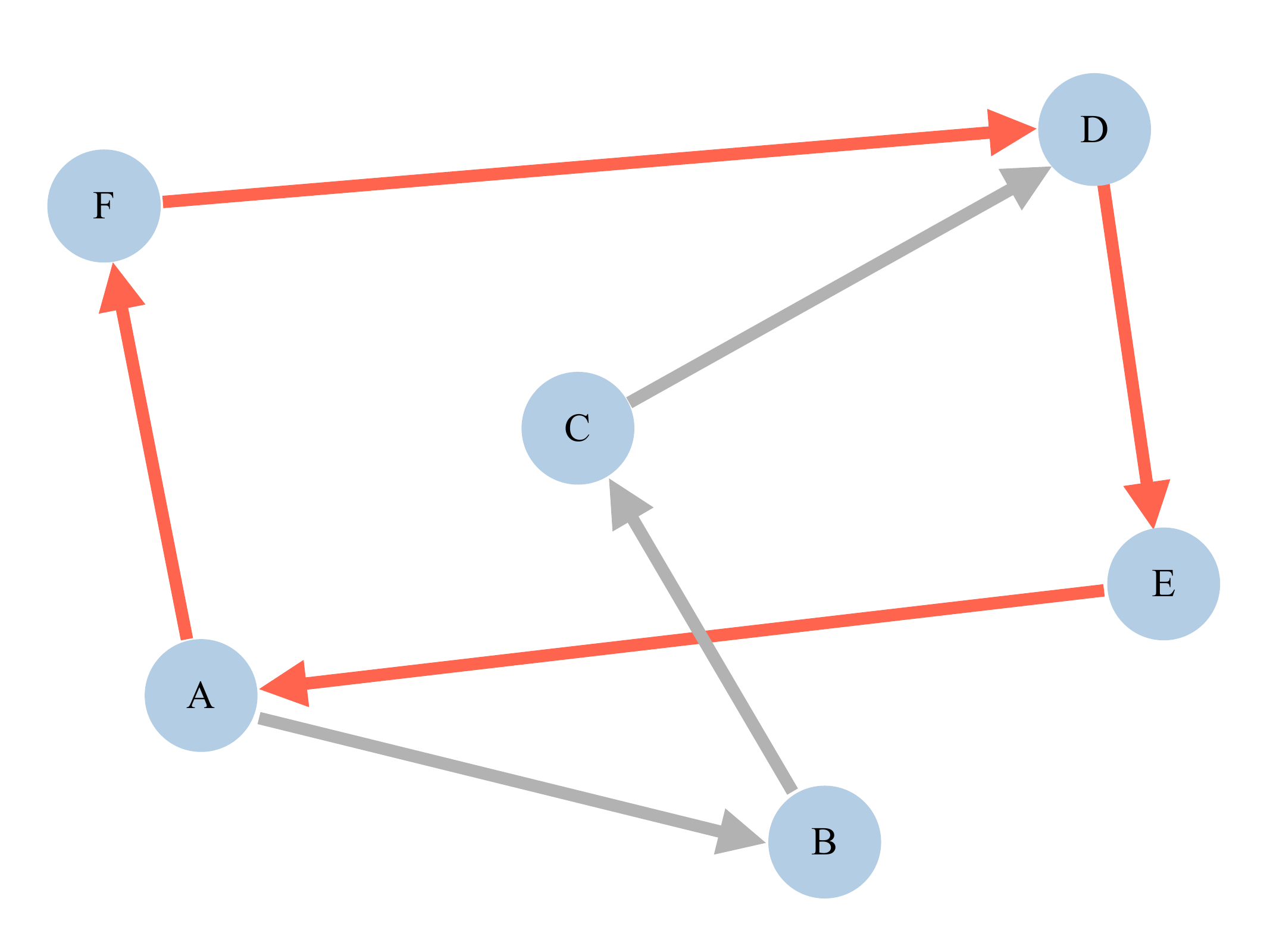}
    \label{fig: truck}
    }
    \subfigure[Drone Route (green)]{
    \includegraphics[width=0.46\linewidth]{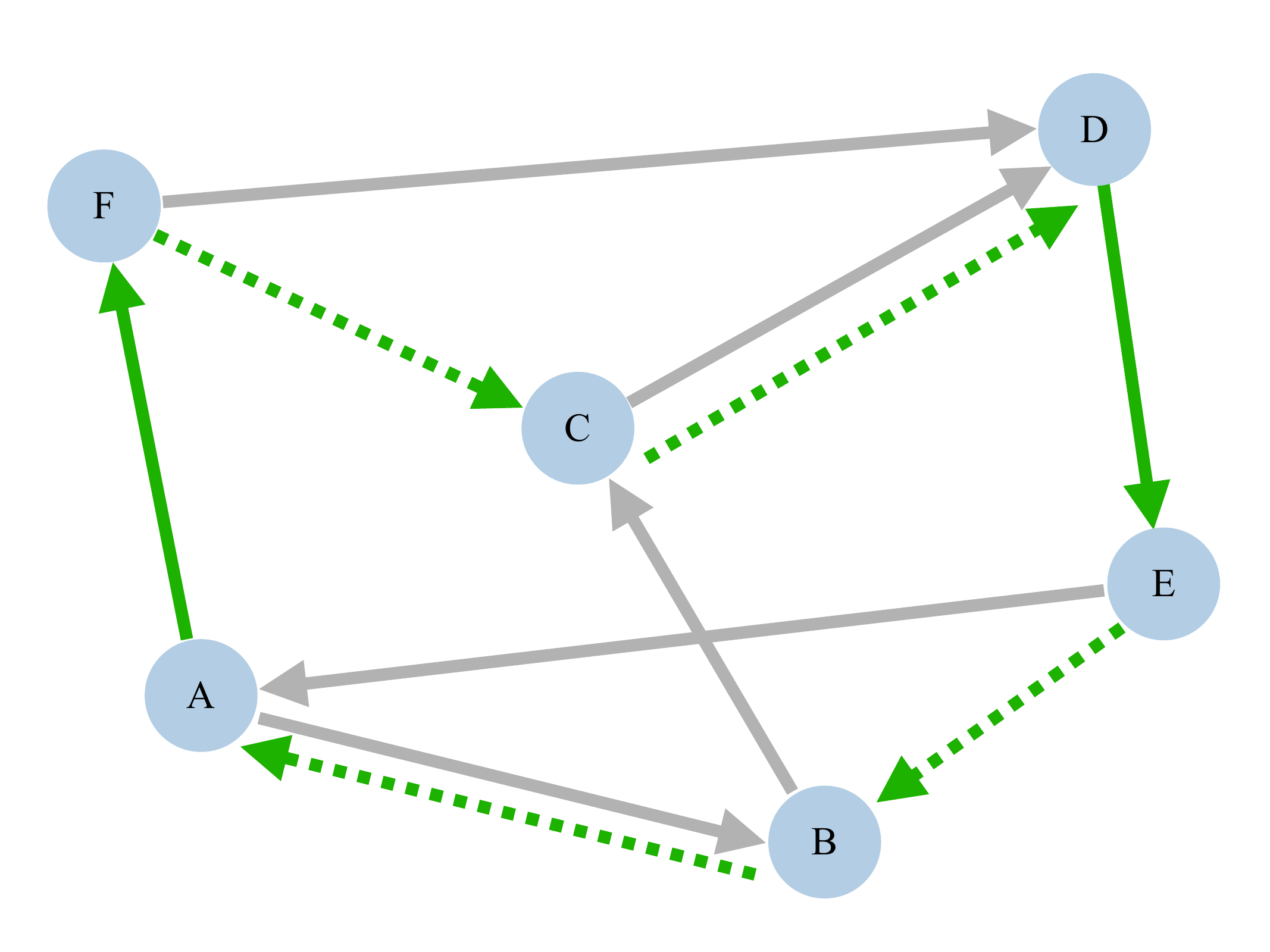}
    \label{fig: drone}
    }
    \caption{An example of the truck and drone routes on a directed graph with node 1$\sim$6. The solid arrows (in all colors) represent the road that the truck must follow. The truck route (A, F, D, E, A) is colored red and the drone route (A, F, C, D, E, B, A), represented as $\left<(\text{A, F, D, E, A}), \{(2,\text{C},3), (5,\text{B},6)\}\right>$, where 2, 3, 4, 5 are indices of vertices F, D, E, A in the truck route, is colored green. The solid green arrows mean the drone is carried by a truck, and the dashed green arrows mean the drone is flying freely.}
    \label{fig: truck and drone route}
\end{figure}
The MA-FSTSP requires finding the fastest tours for each truck group to visit a set of customers, ensuring each customer is visited exactly once while meeting the following constraints: (1) trucks can only travel on roads, whereas drones can fly directly between locations after departing from the truck; (2) drones have a limited flight distance due to battery capacity and must return to their truck; (3) drones can be dispatched from and collected by the same truck at different locations; (4) takeoff and landing actions occur only at road endpoints for simplicity; (5) drones can visit only one customer before returning to the truck due to limited payload capacity.

Formally, MA-FSTSP considers shared road network represented as a strongly connected directed graph\footnote{A directed graph is called strongly connected if there is a path in each direction between each pair of graph vertices.} $\mathcal{G}=(\mathcal{V}, \mathcal{E})$, where $\mathcal{V}$ is the set of vertices and $\mathcal{E}$ is the set of edges. Each pair of vertices $u, v\in\mathcal{V}$ is associated with a distance $d(u,v)\ge 0$, which may represent the Euclidean distance on a planar map or geometric distance on Earth. Let $\mathcal{P}=\{p_1,p_2,\cdots,p_m\}\subseteq \mathcal{V}$ be the set of $m$ depots for each truck group to start and end their tours, and $\mathcal{C}=\{c_1,c_2,\cdots,c_n\}\subseteq \mathcal{V}$ be the set of $n$ customers to visit. A customer $c\in \mathcal{C}$ can be visited by any agent starting from any depot $p\in\mathcal{P}$. Furthermore, we assume $\mathcal{C}\cap\mathcal{P}=\varnothing$.

We plan for a fleet of agents comprising $m$ truck groups starting from different depots, denoted as $\mathcal{T}_p$ for each $p\in\mathcal{P}$. Each truck group consists of a truck carrying $k$ drones, denoted as $\mathcal{D}_p$. We assume that trucks move along the edges at a constant speed $s_{\text{tr}}>0$ and drones fly at a constant speed $s_{\text{dr}}>0$ after being dispatched from the truck. Drones can take off from and land on their truck at any $v\in\mathcal{V}$ and can fly directly between any pair of vertices, independent of the road network. Once dispatched, the drone is limited to serve one customer due to the loading capacity and is limited to fly at most $r$ due to the battery capacity before landing back on the truck. 


Next, we formally define the \textit{truck route} and the \textit{drone route} that constitute a solution to the MA-FSTSP.
\begin{definition}[\textbf{Truck Route}]
    We call a sequence of vertices $(v_0, v_1,\cdots, v_l)$, $v_j\in\mathcal{V},\forall j\in \{0,1,\cdots,l\}$, a truck route if
    $v_0=v_l=p\in\mathcal{P}$, and $(v_j,v_{j + 1})\in\mathcal{E}$, $\forall j\in\{0,1,\cdots, l-1\}$. 
\end{definition}
Intuitively, a truck route forms a cycle in the graph, starting and ending at a specific depot, as illustrated in \autoref{fig: truck}. Before discussing drone routes, we define a drone delivery tuple to represent these routes effectively.
\begin{definition}[\textbf{Drone Delivery Tuple}]
    Given a truck route $(v_0, v_1,\cdots, v_l)$, a drone delivery tuple $(x, c, y)$ describes a delivery task that a drone carried by the truck takes off from $v_x$, visits customer $c$, and lands on the truck at $v_y$. 
\end{definition}
\begin{definition}[\textbf{Drone Route}]
    A drone route is represented by a tuple $\left<(v_0, v_1,\cdots, v_l), \{(x_1, c_1, y_1),(x_2,c_2,y_2),\cdots\}\right>$, where $(v_0, v_1,\cdots, v_l)$ is the truck route that the drone accompanies, and $\{(x_1, c_1, y_1),(x_2,c_2,y_2),\cdots\}$ comprises drone delivery tuples of the delivery tasks executed by drones. 
\end{definition}
\autoref{fig: drone} shows an example of the drone route. We define the truck group route via the truck route and drone routes. 
\begin{definition}[\textbf{Truck Group Route}]
    Given truck group $\mathcal T_p$ starting from depot $p$, the route is represented as a $(k+1)$-tuple $\pi_p$, where $\pi_p[0]=(v_0, v_1, \cdots, v_l)$ is a truck route that $v_0=v_l=p_i$, $\pi_p[j]=\{(x_1^{(j)}, c_1^{(j)}, y_1^{(j)}),(x_2^{(j)},c_2^{(j)},y_2^{(j)}),\cdots\}$, $j=1,2,\cdots k$, are $k$ sets of drone delivery tuples such that $\left<\pi_p[0], \pi_p[j]\right>$, $j=1,2,\cdots k$, are all drone routes. 
\end{definition}
The total cost of a truck group route $\pi_p$, denoted as $\text{cost}(\pi_p)$, is defined as the minimum time needed for the truck group $\mathcal T_p$ to address all drone delivery tasks and return to the depot $p$. Now, we formally state the MA-FSTSP.
\begin{problem}[\textbf{MA-FSTSP}]
Given a strongly connected directed graph $\mathcal{G}=(\mathcal{V}, \mathcal{E})$, and $m$ truck groups starting from different depots in set $\mathcal{P}\subseteq\mathcal{V}$ to visit a set of $n$ customers $\mathcal{C}\subseteq\mathcal{V}$, find a set of $m$ truck group routes $\{\pi_p\}_{p\in\mathcal{P}}$, one for each truck group, to visit every customer $c\in\mathcal{C}$ exactly once that minimizes the total cost $\sum_{p\in\mathcal{P}} \text{cost}(\pi_p)$.   
\end{problem}

\section{Mathematical formulation}
\label{sec:milp}
According to the problem defined in \autoref{sec:problem}, this section proposes the MILP model for MA-FSTSP. 
\subsection{Notations}
Before formulating the model, we summarize the parameters and decision variables in \autoref{tab: param} and \autoref{tab: var}. 
\begin{table}[ht]
    \caption{Parameter notations}
    \label{tab: param}
    \centering
    \begin{tabular}{p{0.1\textwidth}p{0.83\textwidth}}
    \toprule
    Parameter & Description \\
    \midrule
    $\mathcal{C}$   & Set of customers\\
    $\mathcal{P}$   & Set of depots\\
    $\mathcal{V}$   & Set of vertices in the road network \\
    $\mathcal{D}_p$ & Set of drones carried by the truck from depot $p$\\
    $d(u, v)$ & Geometry distance between vertex $u$ and vertex $v$ \\
    $d^{\text{tr}}(u, v)$ & Shortest path distance from vertex $u$ to vertex $v$ along road network\\
    $T$ & Maximum possible tour length for one truck\\
    $M$ & A sufficiently large positive number \\
    \bottomrule
    \end{tabular}
\end{table}
\begin{table}[ht]
    \caption{Variables notations}
    \label{tab: var}
    \centering
    \begin{tabular}{p{0.1\textwidth}p{0.83\textwidth}}
    \toprule
    Variable & Description\\
    \midrule
    $\alpha_{p, c}$ & Equals 1 if customer $c$ is visited by the truck group from depot $p$; 0 otherwise \\
    $e_{p, u, v, t}$ & Equals 1 if the truck route for the truck from depot $p$ has $v_{t-1}=u$ and $v_t=v$, 0 otherwise\\
    $\beta_c$ & Equals 1 if customer $c$ is visited by drone, 0 if visited by truck \\
    $x_{u, c, t}$ & Equals 1 if the drone visiting customer $c$ take off from the truck at truck route vertex $v_{t-1}=u$, 0 otherwise \\
    $y_{u, c, t}$ & Equals 1 if the drone visiting customer $c$ synchronize with truck at truck route vertex $v_t=u$, 0 otherwise\\
    $l_{p, t}$ & Number of drones left on the truck from depot $p$ at $v_t$\\
    $\tau_{p, u, t}$ & Arrival time at vertex $v_t=u$ for truck from depot $p$\\
    \bottomrule
    \end{tabular}
\end{table}

\subsection{Mathematical model}
\begin{subequations}\label{milp:prob}
\begin{flalign}
    \min\ &\sum_{p\in \mathcal{P}} \tau_{p,p,T}\label{milp:obj}\\
    s.t.\ & \sum_{p\in\mathcal{P}}\alpha_{p, c}=1, & c\in\mathcal{C},\label{milp:part}\\
    & \sum_{u\in\mathcal{V}}|\mathcal{C}|\cdot e_{p, p, u, 0} \ge \sum_{c\in\mathcal{C}}\alpha_{p, c}, & p\in\mathcal{P},\label{milp:edge_start}\\
    & \sum_{u, v\in\mathcal{V}}e_{p, u, v, t}\le 1, & p\in\mathcal{P}, t\le T,\\
    & \sum_{u, v\in\mathcal{V}}e_{p, u, v, t + 1}\le \sum_{u, v\in\mathcal{V}}e_{p, u, v, t}, & p\in\mathcal{P}, t\le T,\\
    & \sum_{u\in\mathcal{V}}\sum_{t\le T} e_{p, u, v, t}=\sum_{u\in\mathcal{V}}\sum_{t\le T}e_{p, v, u, t}, & p\in\mathcal{P}, v\in\mathcal{V}\\
    & \sum_{u\in\mathcal{V}}e_{p, u, v, t} = \sum_{u\in\mathcal{V}} e_{p, v, u, t + 1}, & p\in\mathcal{P}, v\in\mathcal{V}, t\le T-1,\label{milp:edge_end}\\
    & \sum_{u\in\mathcal{V}}\sum_{t\le T}x_{u, c, t} = \beta_{c}, & c\in\mathcal{C},\label{milp:take_off}\\
    & \sum_{u\in\mathcal{V}}\sum_{t\le T}y_{u, c, t} = \beta_{c}, & c\in\mathcal{C},\label{milp:land}\\
    & \sum_{u\in\mathcal{V}}\sum_{s\le t}x_{u, c, s}\ge \sum_{u\in\mathcal{V}}\sum_{s\le t}y_{u, c, t}, & c\in\mathcal{C}, t\le T,\label{milp:order}\\
    &\sum_{u\in\mathcal{V}}\sum_{t\le T}x_{u, c, t} \cdot d(u, c) + y_{u, c, t}\cdot d(c, u)\le r, & c\in\mathcal{C},\label{milp:limit}\\
    & \sum_{t\le T}\sum_{u\in\mathcal{V}}e_{p, c, u, t} - \sum_{t\le T} e_{p, c, c, t} \ge 1 - \beta_c - (1-\alpha_{p, c})\cdot M, & p\in\mathcal{P},  c\in\mathcal{C},\label{milp:truck_visit}\\
    & \sum_{u\in\mathcal{V}}|\mathcal{D}_p|\cdot e_{p, v, u, t} \ge \sum_{c\in\mathcal{C}}x_{v, c, t} \cdot \alpha_{p, c}, &p\in\mathcal{P}, v\in\mathcal{V}, t\le T,\label{milp:sync_takeoff}\\
    &\sum_{u\in\mathcal{V}}|\mathcal{D}_p|\cdot e_{p, u, v, t} \ge \sum_{c\in\mathcal{C}}y_{v, c, t}\cdot \alpha_{p,c}, &p\in\mathcal{P}, v\in\mathcal{V}, t\le T,\label{milp:sync_land}\\
    & \sum_{v\in\mathcal{V}}\sum_{t\le T}2|\mathcal{C}|\cdot e_{p, u, v, t}-\sum_{t\le T}2|\mathcal{C}|\cdot e_{p, u, u, t}\ge \sum_{c\in\mathcal{C}}\sum_{t\le T}x_{u, c, t}\cdot \alpha_{p,c} +y_{u, c, t}\cdot \alpha_{p,c}, & p\in\mathcal{P}, u\in\mathcal{V},\label{milp:cycle}\\
    & l_{p,1} = |\mathcal{D}_p| - \sum_{c\in\mathcal{C}}x_{p, c, 1}\cdot \alpha_{p, c}, & p\in\mathcal{P},\label{milp:drone_start}\\
    & l_{p, T} = |\mathcal{D}_p|, & p\in\mathcal{P},\\
    & l_{p, t}\le |\mathcal{D}_p|, &p\in\mathcal{P}, t\le T,\\
    & l_{p, t} - l_{p, t-1} = \sum_{u\in\mathcal{V}}\sum_{c\in\mathcal{C}}y_{u, c, t}\cdot \alpha_{p,c} - x_{u, c, t}\cdot \alpha_{p,c}, & p\in\mathcal{P}, 2\le t\le T \label{milp:drone_end}\\
    &\tau_{p,u,t}\le \tau_{p,u, t+1}, &p\in\mathcal{P}, u\in\mathcal{V}, t\le T - 1,\label{milp:time_consist}\\
    & \tau_{p,u,t}\ge \tau_{p,v,t-1}+e_{p, v, u, t}\cdot d^{\text{tr}}(v, u) / s^{\text{tr}} - (1 - e_{p,v,u, t})\cdot M, & p\in\mathcal{P}, u, v\in\mathcal{V}, 2\le t\le T,\label{milp:time_truck}\\
    & \tau_{p, u, t}\ge \tau_{p, v, s} + x_{v, c, s + 1}\cdot\alpha_{p, c}\cdot d(v, c) / s^{\text{dr}}+ y_{u, c, t}\cdot\alpha_{p, c}\cdot d(c, u)/ s^{\text{dr}}\nonumber\\
    &\phantom{\tau_{p, u, t}\ge \tau_{p, v, s}}-(2-x_{v, c, s+1}-y_{u,c, t})\cdot M, &p\in\mathcal{P}, u,v\in\mathcal{V}, c\in\mathcal{C}, s<t\le T.\label{milp:time_drone}
\end{flalign}
\end{subequations}
The objective function \ref{milp:obj} minimizes the overall time cost to visit all customers and return to the depots. Constraint \ref{milp:part} assigns each customer to exactly one truck group. Constraints \ref{milp:edge_start}-\ref{milp:cycle} ensure the truck route of each truck group is a valid cycle as people did in TSP. Constraints \ref{milp:take_off}-\ref{milp:land} select the take-off and land vertices for customers drones visit. Constraint \ref{milp:order} ensures the correct drone take-off and landing order. Constraint \ref{milp:limit} limits the flying distance of drones for each customer visit. Constraint \ref{milp:truck_visit} ensures trucks to visit their assigned customers. Constraints \ref{milp:sync_takeoff} and \ref{milp:sync_land} synchronize the truck and drones at the selected take-off and land vertices. Constraints \ref{milp:drone_start}-\ref{milp:drone_end} track and constrain the number of available drones throughout the route. Constraint \ref{milp:time_consist} maintains the time consistency along the route. Constraint \ref{milp:time_truck} accounts for the truck travel time, while constraint \ref{milp:time_drone} accounts for the drone travel time. 

The product term of binary variables $x_{u,c,t}\cdot\alpha_{p, c}$ can be linearized as a new binary variable $z_{u, c, t}$ which satisfies $z_{u,c, t}\ge x_{u,c,t}+\alpha_{p, c}-1$, $z_{u,c,t}\le x_{u,c,t,}$, and $z_{u,c,t}\le \alpha_{p, c}$. The term $y_{u,c,t}\cdot\alpha_{p, c}$ can be linearized similarly, so the model is a MILP. For readability, the product terms are kept in the constraint formulations.

\section{Methodology}
\label{sec:methodology}
\begin{algorithm}[!htb]
    \caption{Framework to solve MA-FSTSP}\label{alg:method}
    \textbf{Input:} Road network $\mathcal{G}$, customer set $\mathcal{C}$, depot $\mathcal{P}$, truck speed $s_{\text{tr}}$, drone speed $s_{\text{dr}}$, drone endurance $r$, pairwise distance $d$
    \begin{algorithmic}[1]
        \State Initialize solution $\textsc{Soln}$ as empty set 
        \For{$(u, v)\in\mathcal{C}\times\mathcal{C}$}
        \State $d^{\text{tr}}(u,v)\leftarrow\text{ShortestPathLength}(u,v;\mathcal{G})$
        \EndFor
        \State Assign customers $\{\mathcal{C}_i\}_{i=1}^{|\mathcal{P}|}\leftarrow\text{Partition}(\mathcal{C},\mathcal{P};d^{\text{tr}}, d)$
        \For{$i\in\{1,2,\cdots,m\}$}
        \State $\mathcal{S}\leftarrow\{\{n\in\mathcal{V}(\mathcal{G}):w(n, c)<r/2\}:c\in\mathcal{C}_i\}\cup\{\{p_i\}\}$ \Comment{For each customer $c$ collect nearby vertices}
        \State \textsc{Order} $\leftarrow$ SetTSP($\mathcal{S};d^{\text{tr}}, d;s_{\text{tr}}, s_{\text{dr}}$)
        \State \textsc{Soln}$[i]\leftarrow\text{GetRoute}(\mathcal{C}_i, p_i;\textsc{Order};d^{\text{tr}}, d;s_{\text{tr}}, s_{\text{dr}})$
        \EndFor
        \State \textbf{return} \textsc{Soln}
    \end{algorithmic}
\end{algorithm}

We propose a three-phase algorithm to solve MA-FSTSP, as shown in \autoref{alg:method}. In phase 1, the problem is decomposed to $m$ subproblems (recall that $m$ is the number of depots) of a single truck carrying multiple drones to visit customers on road networks by allocating customers to truck groups via set-based partition algorithms. Next, in phase 2, a set TSP heuristic is applied to determine the visiting order of customers for each truck group. Finally, in phase 3, a truck-drone route is computed given the restriction of customers' visiting orders. The pseudocode for the proposed algorithm is shown in \autoref{alg:method}. Line 1$\sim$4 is the preparation for data. Line 5 corresponds to phase 1. Line 7 computes the set for each customer, and lines 8$\sim$9 correspond to phases 2 and 3, respectively. Methods of phase 1$\sim$3 are introduced in \autoref{sec:phase1}, \autoref{sec:phase2}, and \autoref{sec:phase3}, respectively. An acceleration method for Set-TSP is introduced in \autoref{sec:acc}. The computational complexity analysis of each phase is detailed in \ref{app: tc}.

\subsection{Phase 1: Set-based partition algorithms}\label{sec:phase1}
The first phase of our proposed algorithm focuses on efficiently assigning customers to truck groups. This section introduces a novel set-based extension approach that addresses the limitations of existing assignment methods for truck-drone delivery systems.

Commonly used strategies for customer assignment in multi-agent systems include the nearest-neighbor algorithm~\cite{ho2008hybrid, salhi2014multi, geetha2012metaheuristic} and the minimum spanning tree partition algorithm~\cite{yang2024hierarchical}. However, these methods often fall short in practical scenarios where geometric proximity does not accurately reflect travel time or cost. For example, two geographically close vertices in practice might be distantly connected in road networks, commonly seen in practical scenarios such as places around highways and rivers, one-way roads, and traffic bottlenecks. In these cases, neither geometric nor road map distance accurately estimates the time required for a truck group to travel between customers. 

To address these limitations, we propose a set-based distance metric that more accurately approximates the point-wise distance for truck groups. This approach extends the nearest-neighbor and minimum spanning tree partition algorithms to their set versions.

Inspired by \citet{marcucci2021shortest}, we define a set of neighboring vertices for each customer $c \in \mathcal{C}$ as follows:
\begin{equation}\label{eq: neighbor}
    \mathcal{S}_c(\theta)\coloneqq\{n\in\mathcal{V}(\mathcal{G}):d(n,c)\le\theta\}
\end{equation} 
where $\theta$ is a distance parameter, and $d(n,c)$ represents the distance between vertex $n$ and customer $c$. It is worth noting that if $\theta \le r$, each vertex in $\mathcal{S}_c$ serves as a valid takeoff or landing point for a drone visiting customer $c$. Using this set, we define the distance from set $\mathcal S_c(\theta)$ to set $\mathcal S_{c'}(\theta)$ as
\begin{equation}\label{eq:set}
    d^{\text{set}}(\mathcal S_c(\theta), \mathcal S_{c'}(\theta)) \coloneqq \min_{v\in\mathcal{S}_c(\theta), v'\in\mathcal{S}_{c'}(\theta)}\left\{d(c, v)/s_{\text{dr}} + d^{\text{tr}}(v, v') /s_{\text{tr}}+d(v', c')/s_{\text{dr}}\right\}\cdot s_{\text{tr}},
\end{equation}
where $d^\text{tr}$ is the shortest path length for the truck, and $s_{\text{dr}}$ and $s_{\text{tr}}$ are the speeds of the drone and truck, respectively. This distance metric is normalized based on the time required for a truck group to visit customer $c'$ after visiting customer $c$. Since the vertex $p$ can also be viewed as a set $S_p(0^+)$ containing $p$ only, we extend the domain of $\mathcal{S}_c$ from $c\in\mathcal{C}$ to $c\in\mathcal{C}\cup\mathcal{P}$. Without ambiguity, we omit $\theta$ and abbreviate the distance as $d^{\text{set}}(c,c')$ for $c, c'\in\mathcal{C}\cup\mathcal{P}$. 

\begin{algorithm}[!htb]
    \caption{Pseudocode for Set-Based Paritition}\label{alg:snn}
    \textbf{Input:} Road network $\mathcal{G}$, customer set $\mathcal{C}$, depot $\mathcal{P}$, truck speed $s_{\text{tr}}$, drone speed $s_{\text{dr}}$, radius $\theta$, pairwise distance $d$
    \begin{algorithmic}[1]
        \For{$c\in\mathcal{C}$}
        \State $\mathcal{S}_c(\theta)\leftarrow\{v\in\mathcal{V}(\mathcal{G}):d(v, c) < \theta\}$
        \EndFor
        \For{$p\in\mathcal{P}$}
        \State $\mathcal{S}_p\leftarrow\{p\}$
        \EndFor
        \State Construct fully connected graph $\mathcal{G}'(\mathcal{C}\cup\mathcal{P})$ using edge weight $w(c, c')=d^\text{set}(c, c')$
        \Comment{Using Eq.~\ref{eq:set}}
        \State $\{\mathcal{C}_p\}_{p\in\mathcal{P}}\leftarrow$NearestNeighbor($\mathcal{P}, \mathcal{C}$) or MinimumSpanningTreePartition($\mathcal{P},\mathcal{C}$)
        \State \textbf{return} assignment $\{\mathcal{C}_p\}_{p\in\mathcal{P}}$
    \end{algorithmic}
\end{algorithm}

The pseudocode for this assignment is given in \autoref{alg:snn}. We compute the pairwise $d^{\text{set}}$ (line 1-6) to construct a fully connected graph (line 7), on which we run the NN or MST partition algorithm to get the assignment (line 8). 

Next, each group finds their own shortest tours in phase 2-3. 

\subsection{Phase 2: Set traveling salesman problem heuristic}
\begin{figure}
    \centering\includegraphics[width=0.8\linewidth]{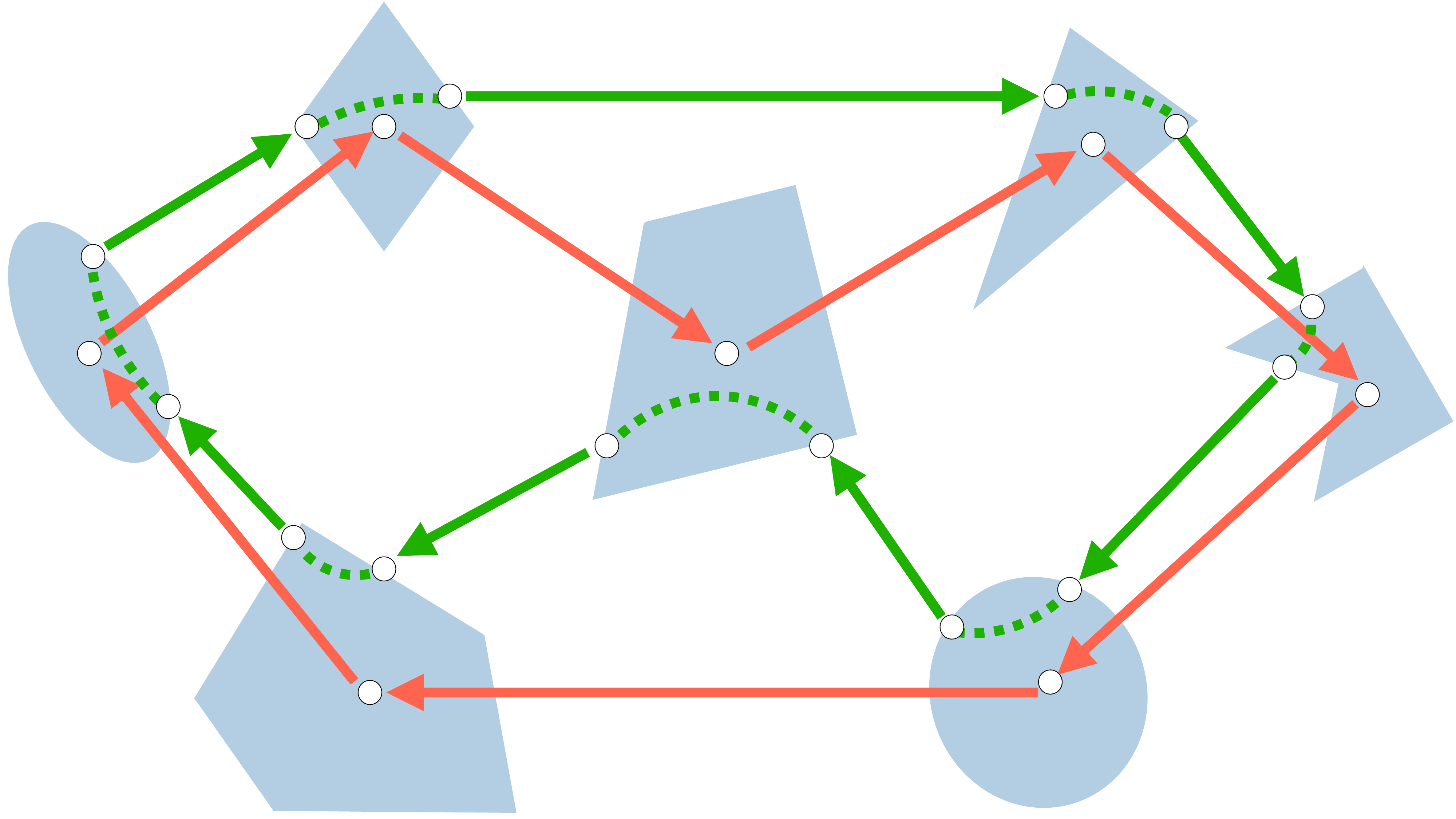}
    \caption{An example of Set TSP versus TSP. Each set represents a set of vertices in the graph, which can be arbitrarily connected inside. Sets are fully connected via vertices at the boundary. The red arrows represent the optimal TSP tour to visit the center vertices of sets in the graph, and the green arrows represent the optimal Set TSP tour to visit every set. Since Set TSP does not require visiting any exact vertex, it can pass through the set via \textbf{shortcuts} represented in dashed green. As a result, the visiting order of sets on the TSP tour and Set TSP tour can be different.}
    \label{fig: stsp}
\end{figure}
\label{sec:phase2}
Following the same insight in \autoref{sec:phase1}, we extend the TSP heuristic to a set version as well to take the usage of drones into account, which is formally stated as:
\begin{problem}[\textbf{Set TSP}]
    Given $n$ location sets $V_1, V_2,\cdots, V_n$ and the traveling costs between each pair of locations, find the route with the lowest total cost that visits each location set exactly once and returns to the starting location set. A visit to a location set is defined as consecutively visiting one or more locations within that set.
\end{problem}

An example of Set TSP versus TSP is shown in \autoref{fig: stsp}. For the truck group $\mathcal{T}_i$, approximating the task of designing a tour that visits each customer in $\mathcal{C}_i$, starting and ending at depot $p_i$, can effectively be modeled by a set TSP as follows. (1) The location sets are $\{S_c(\theta): c\in \mathcal C_i\}\cup\{\{p_i\}\}$, where $S_c(\theta)$ is defined in \autoref{eq: neighbor}. We choose $\theta=r/2$ so that all pairs of vertices in $S_c(\theta)$ can be the start and end vertices of a drone delivery task. (2) The traveling cost from $u\in S_c(\theta)$, $c\in C_i$ to $v$ is defined as: 
\begin{equation}\label{eq: cost}
   w(u,v)=\begin{cases}
       \min\Big\{\max\Big\{\frac{d(u,c) + d(c, v)}{s_\text{dr}}, d(u, v)/s_\text{tr}\Big\}, \frac{d^{\text{tr}}(u, c)+d^{\text{tr}}(c, v)}{s_{\text{tr}}}\Big\}, & v\in S_c(\theta);\\
       d(u,v)/s_\text{tr}, &v\notin S_c(\theta).
   \end{cases} 
\end{equation}
The cost is the time the truck group needs to move from vertex $u$ to vertex $v$ and visit customer $c$ if $u,v\in S_c(\theta)$. The visiting order of customers in the optimal route found for the set TSP is used as the heuristic in Phase 3.


The Set TSP is an NP-hard problem since the TSP is a special case when each set only contains one vertex, and the total cost can be validated in polynomial time. To solve it, we extend the Gavish-Graves (GG) MILP~\cite{gavish1978travelling} for TSP. 

Let $\beta_{c, c'}\in\{0,1\}$ be a binary variable indicating whether the next customer or depot is $c'$ after visiting customer or depot $c$, and $y_{c, c'}\in\mathbb{R}_+$ be the corresponding network flow. The TSP property of visiting each set exactly once can be expressed as 
\begin{subequations}\label{milp:tsp}
\begin{flalign}
    &\beta_{c, c} = y_{c, c} = 0, &\forall  c\in\mathcal{C}_i\cup\{p_i\},\label{cstr:selfloop}\\
    &\sum_{c'\in\mathcal{C}_i\cup\{p_i\}}\beta_{c, c'}=\sum_{c'\in\mathcal{C}_i\cup\{p_i\}}\beta_{c', c}=1, &\forall c \in\mathcal{C}_i\cup\{p_i\},\label{cstr:circle}\\
    &y_{c, c'}\le |\mathcal{C}_i|\cdot\beta_{c, c'}, &\forall c, c'\in\mathcal{C}_i\cup\{p_i\},\label{cstr:upper}\\
    &\sum_{c\in\mathcal{C}_i\cup\{p_i\}}y_{p_i, c}=|\mathcal{C}_i|,\\
    &\sum_{c\in\mathcal{C}_i\cup\{p_i\}}y_{c, p_i}=0,\\
    &\sum_{c'\in\mathcal{C}_i\cup\{p_i\}}y_{c', \bar c}-\sum_{c'\in\mathcal{C}_i\cup\{p_i\}}y_{c, c'}=1, &\forall c\in\mathcal{C}_i\cup\{p_i\}\label{cstr:consecutive}.
\end{flalign}
\end{subequations}
The constraint~\ref{cstr:selfloop} forbids self-loops, \ref{cstr:circle} forces the solution to be a collection of cycles, and \ref{cstr:upper}$\sim$\ref{cstr:consecutive} ask the flow to reduce 1 unit each step along the cycle from $|\mathcal{C}_i|$ to 0, which only allows the existence of one cycle.

Let $\gamma_{u, v}$ be a binary variable indicating whether a drone tour starts from $u$ to $v$ for $u, v\in\mathcal{S}_c$ and $\delta_{v, u}$ be a binary variable indicating whether a truck picks up a drone at $v$ and carry it to $u$ to dispatch it. The constraints are:
\begin{subequations}\label{milp:set}
\begin{flalign}
    &\sum_{u, v\in\mathcal{S}_{c}}\gamma_{u,v} = 1, &\forall c\in \mathcal{C}_i,\label{cstr:visit}\\
    &\sum_{v\in\mathcal{S}_{c}} \sum_{u\in\mathcal{S}_{c'}}\delta_{v, u} = \beta_{c, c'}, &\forall c, c'\in\mathcal{C}_i\cup\{p_i\},\label{cstr:align}\\
    &\sum_{u\in\mathcal{S}_{c}}\gamma_{u, v} = \sum_{c'\in\mathcal{C}_i\cup\{p_i\}}\sum_{w\in \mathcal{S}_{c'}}\delta_{v, w}, &\forall c\in\mathcal{C}_i\cup\{p_i\},v\in\mathcal{S}_{c}\label{cstr:in}\\
    &\sum_{v\in\mathcal{S}_{c}}\gamma_{u, v} = \sum_{c'\in\mathcal{C}_i\cup\{p_i\}}\sum_{w\in\mathcal{S}_{c'}}\delta_{w, u}, &\forall c\in\mathcal{C}_i\cup\{p_i\}, u\in\mathcal{S}_{c}\label{cstr:out}.
\end{flalign}
\end{subequations}
Constraint~\ref{cstr:visit} encodes the TSP constraint, which asks for exactly one drone visit starting from a $u\in S_c$ and ending at a $v\in S_c$ to each customer $c\in\mathcal C_i$.
Constraint~\ref{cstr:align} aligns the vertex-level route with the set-level route. If $c$ is visited followed by the visit of $c'$, i.e., $\beta_{c, c'}=1$, then the truck route should pass from a $v\in S_{c}$, where it picks up the drone visiting $c$, to a $u\in S_{c'}$, where it dispatches the drone to visit $c'$. Otherwise, such a sub-route does not exist in the truck route. Constraint~\ref{cstr:in} means that if the drone lands at $v\in S_{c}$ after visiting $c$, it is picked up at $v$ by the truck. It synchronizes the truck and the drone at the dispatching vertex. Constraint~\ref{cstr:out}, similarly, synchronizes the truck and the drone at the landing vertex.

The objective function minimizes the sum of costs of the edges (defined in \autoref{eq: cost}) traversed by the route
\begin{equation}\label{eq: obj}
\begin{aligned}
\sum_{c\in\mathcal{C}_i\cup\{p_i\}}\sum_{u\in\mathcal{S}_c}\left(\sum_{v\in\mathcal{S}_c}w(u,v)\cdot\gamma_{u, v} + \sum_{c'\in\mathcal{C}_i\cup\{p_i\}}\sum_{v\in\mathcal{S}_{c'}} w(u,v)\cdot \delta_{u, v}\right).
\end{aligned}
\end{equation}
$\gamma$ and $\delta$ together form the adjacent matrix of the truck group tour on the fully connected graph $\mathcal{G}(\{p_i\}\cup_{c\in\mathcal{C}_i}S_c(\theta))$, from which we can extract the order of customers to visit. 
\subsection{Phase 3: Decode solution from heuristic}
\label{sec:phase3}
Even when the customers' visiting order is given, determining the optimal route for the truck group is an NP-hard problem.
To approximate the optimal solution in polynomial time, we only consider the action of dispatching all drones simultaneously. Intuitively, when using multiple drones outperforms using one drone, the customers are close to each other within $O(r)$ distance, which means that the assumption does not sacrifice the performance a lot if $r$ is small.

Based on the assumption above, dynamic programming can finish the decoding procedure within a polynomial time. Given visiting order $\mathcal{O}_i$ for truck group $\mathcal{T}_i$, we first define $\textsc{Time}(u, v; s, t)$ to be the minimum time needed for the truck group to visit customers $\{\mathcal{O}_i^s, \mathcal{O}_i^{s+1},\cdots,\mathcal{O}_i^{s+t-1}\}$ with $t$ drones, which are dispatched together from $u$, and the last one is collected at vertex $v$. The initial state $\textsc{Time}(u, v; s, 1)$ is the time for the truck with one drone to visit the customer $\mathcal{O}_i^{s}$ starting from $u$ to $v$, i.e., \begin{align}\label{eq: init}
    &\textsc{Time}(u, v; s, 1)=\begin{cases}
        \frac{d^{\text{tr}}(u, \mathcal{O}_i^s) + d^{\text{tr}}(\mathcal{O}_i^s, v)}{s_{\text{tr}}},& d(u, \mathcal{O}_i^s) + d(\mathcal{O}_i^s, v) > r;\\
        \max\{\frac{d(u, \mathcal{O}_i^s) + d(\mathcal{O}_i^s, v)}{s_{\text{dr}}},
        \frac{d^{\text{tr}}(u, v)}{s_{\text{tr}}}\}, & \text{otherwise}.
    \end{cases}
\end{align}
In the first case, the drone cannot finish the delivery task due to the energy limit given the position of $u$, $v$, and $O_i^s$. We set $\textsc{Time}(u,v;s, 1)$ as the time needed for the truck to finish the delivery task. In the second case, where the drone can finish the delivery task, we set $\textsc{Time}(u,v;s, 1)$ as the minimum time in which the truck can move from $u$ to $v$ along the road network and the drone can fly from $u$ to $O_i[s]$ and then to $v$. For $t\ge 2$, if the $t$-th drone can finish the delivery task, i.e., $d(u, \mathcal{O}_i^{s + t - 1}) + d(\mathcal{O}_i^{s + t - 1}, v) \le r$, we compute the minimum time needed to finish the $t$ delivery tasks by identifying the optimal vertex $w$ to collect the $(t-1)$-th drone. This minimizes the truck's overall time to travel from $u$ to $v$ while ensuring that the first $t-1$ drones are collected.  
\begin{align}\label{eq: transit}
&\textsc{Time}(u, v; s, t)=\min\Bigg\{
    \max_{w\in S_{\mathcal{O}_i^{s+t-2}}(r)}
    \textsc{Time}(u, w; s, t - 1)+\frac{d^{\text{tr}}(w,v)}{s_{\text{tr}}}, \frac{d(u, \mathcal{O}_i^{s + t - 1}) + d(\mathcal{O}_i^{s + t - 1}, v)}{s_{\text{dr}}}\Bigg\}.
\end{align}
Otherwise, we set $\textsc{Time}(u,v;s,t)$ to be $+\infty$.

Then, we define $\textsc{Value}(s, u)$ as the optimal cost starting from the depot $p_i$ to the current vertex $u$ after serving the first $s$ customers. The initial state 
\begin{equation}
    \textsc{Value}(0, v) = d^{\text{tr}}(p_i, v),\forall v\in\mathcal{V}(\mathcal{G}),
\end{equation}
which is the adjusted shortest path length from depot $p_i$ to end vertex $u$. Given the number of drones $k$, the transition function takes the $k$ preceding states into account,
\begin{equation}
    \textsc{Value}(s, v) = \max_{0\le t\le k,\ u, w\in\mathcal{V}(\mathcal{G})}\textsc{Value}(s - t, u) +\textsc{Time}(u, w; s - t + 1, t) + d^{\text{tr}}(w, v).
\end{equation}
$\textsc{Value}(|\mathcal{O}_i |, p_i)$ is the optimal cost for truck group $\mathcal{T}_i$, and the corresponding route can be reconstructed by tracing the maximum condition backward. 

Since the drone must take off and land within $\cup_{c\in\mathcal{O}}\mathcal{S}_c(r)$ due to the limit of battery capacity, we only need to initialize the vertices  \begin{equation}
    \textsc{Value}(0, v) = d^{\text{tr}}(p_i, v),\forall v\in\cup_{c\in\mathcal{O}}\mathcal{S}_c(r).
\end{equation}
Furthermore, for all the intermediate vertices $u\in\mathcal{V}$ that the truck travels through in the optimal route without dispatching or collecting drones, $\textsc{Value}(s, u) + \max_{w\in\mathcal{S}_{\mathcal{O}^{s}}(r)}\textsc{Time}(u, w; s-t + 1, t)+d^{\text{tr}}(w, v)$ yields the same value for all $v$ vertices along the optimal route after $u$. This value represents when the truck arrives at vertex $v$ after serving $s$ customers in the optimal route. This observation allows us to eliminate redundant computations. Specifically, we don't need to enumerate $u$ over $\mathcal{V}\setminus\mathcal{S}{O^{s-t}}$ or $w$ over $\mathcal{V}\setminus\mathcal{S}{O^{s}}$. Instead, we can compute the transition function more efficiently as follows: 
\begin{equation}
    \textsc{Value}(s, v) = \max_{0\le t\le k,\ u\in\mathcal{S}_{O^{s-t}}(r), w\in\mathcal{S}_{O^{s}}(r)}\textsc{Value}(s - t, u) +\textsc{Time}(u, w; s - t + 1, t) + d^{\text{tr}}(w, v).
\end{equation}

\subsection{Accelerate Set-TSP}\label{sec:acc}
\begin{figure}
    \centering
    \subfigure[Overlap]{
    \includegraphics[width=0.31\linewidth]{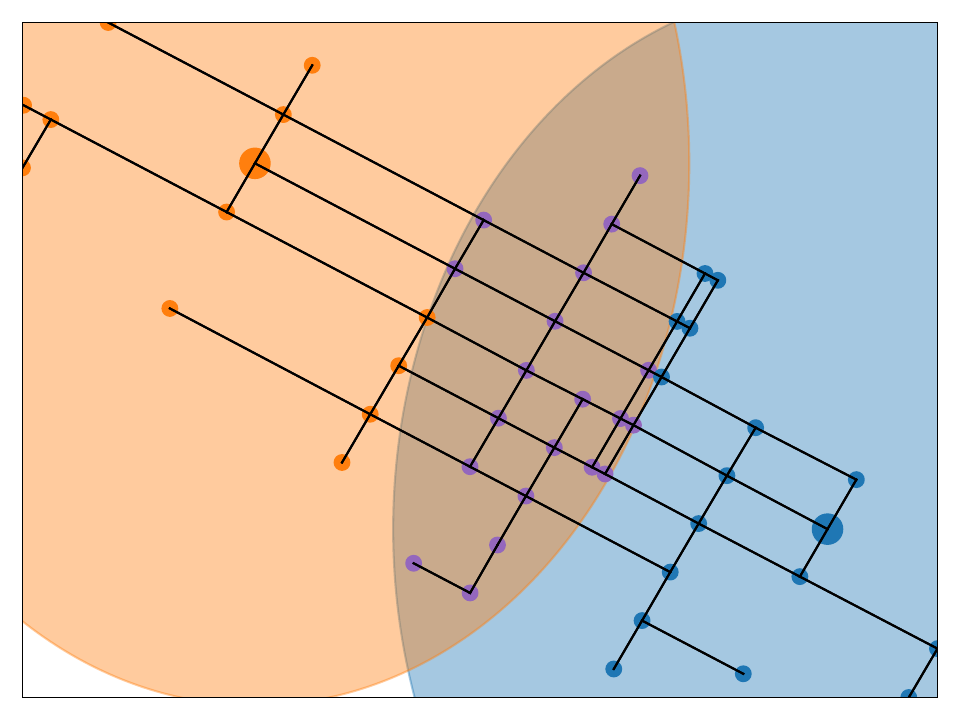}
    \label{fig: overlap}
    }
    \subfigure[Remove overlap]{
    \includegraphics[width=0.31\linewidth]{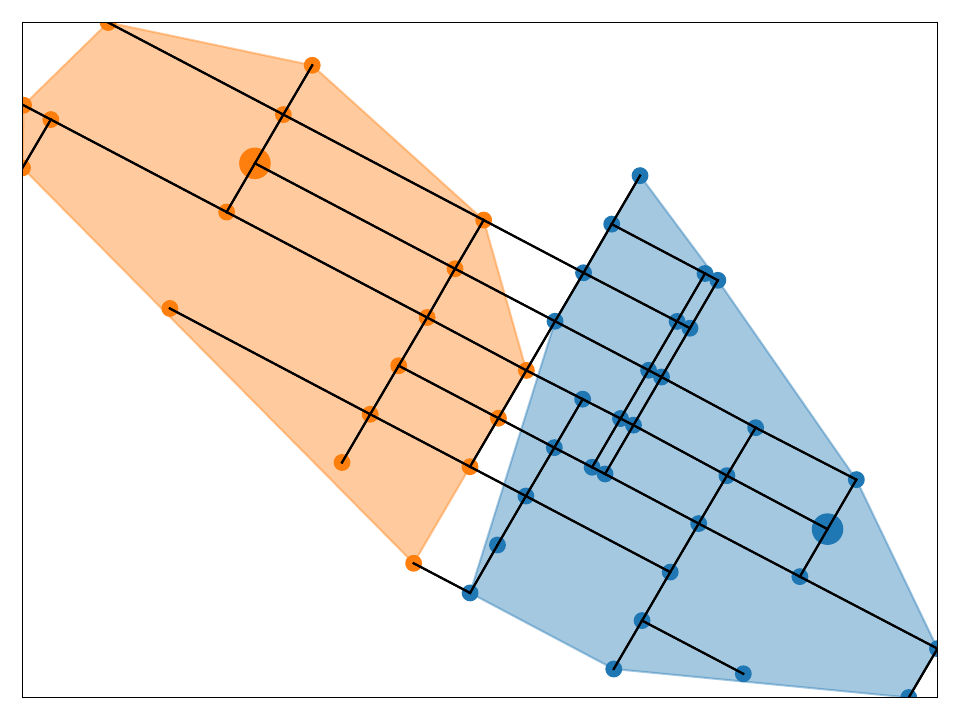}
    \label{fig: split}
    }
    \subfigure[Remove inner nodes]{
    \includegraphics[width=0.31\linewidth]{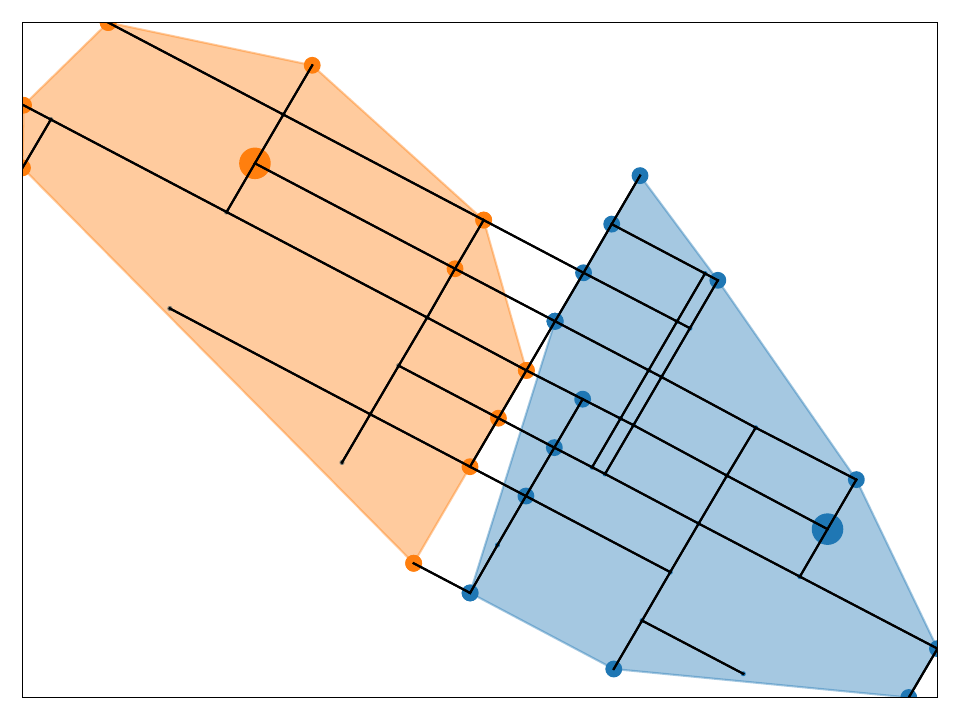}
    \label{fig: boundary}
    }
    \caption{Demonstration of acceleration strategies for the Set-TSP problem. (a) When two customers (the circle centers marked as larger vertices) are close, their neighbor sets $\mathcal{S}_c(r/2)$ overlap. Vertices appearing in both sets are highlighted in purple. (b) Our first strategy reduces redundancy by assigning vertices in the overlapped area to their nearest customer. (c) Our second strategy pre-computes optimal routes for all entering-leaving pairs on the boundary of each set, then removes all inner nodes from the neighbor sets. This significantly reduces the problem's complexity while maintaining solution quality.}
\end{figure}
The Set-TSP proposed in the second phase is solved $|\mathcal{P}|$ times in the problem, making its acceleration crucial for computational efficiency. We present acceleration strategies that significantly speed up the computation with minimal impact on solution quality. 

Our first strategy addresses the observation that as the number of customers increases, the sets of two nearby customers obtained in the second phase often overlap, as illustrated in \autoref{fig: overlap}. In such cases, some vertices appear in multiple sets. We eliminate this redundancy by assigning vertices in overlapped areas to their nearest customer, as shown in \autoref{fig: split}. This simplification reduces the overall complexity of the problem without significantly compromising solution quality.

The second strategy leverages the drone distance limit $r$. Regardless of whether a customer $c$ is visited by a truck or drone, the truck must enter the set $\mathcal{S}_c(r/2)$. Consequently, it will also have a vertex to leave the set at the boundary. Based on this insight, we pre-compute the optimal strategy between all pairs of vertices at the boundary of $\mathcal{S}_c(r/2)$, considering them as entering and leaving vertices for a truck group visiting customer $c$. We then remove all inner nodes in $\mathcal{S}_c(r/2)$, retaining only the boundary of each set. This approach significantly reduces the number of vertices to consider, from $O(r^2)$ to $O(r)$, substantially decreasing computational complexity while maintaining the integrity of the solution space.

To implement these strategies in combination, we first apply the overlap removal technique to each $\mathcal{S}_c(r/2)$ set as per the first strategy. Subsequently, following the second strategy, we eliminate the inner nodes in each resulting set. This integrated approach yields the final configuration as depicted in \autoref{fig: boundary}. The combined effect of these strategies substantially improves computational efficiency for the Set-TSP problem while almost preserving the quality of the solutions obtained.

\section{Experiments and analysis}
\label{sec:experiments}
In this section, we report experiments to validate our method. In \autoref{sec: compare}, we show the effectiveness and scalability by comparing our method with two baselines: a column generation-based method~\cite{gao2023scheduling} and a hill-climbing algorithm. We give the empirical optimality gap an upper bound by comparing it with a lower bound. In \autoref{sec: scale}, we explore the scalability of our approach by varying the problem size in different ways. We validate the set extension approach and the acceleration strategies in \autoref{sec: ablation} and \autoref{sec: acc}. In \autoref{sec: factor}, we do the sensitivity analysis to explore the influence of drone speed, distance limit for drones, and number of drones.

All experiments are performed on a server running on Ubuntu 20.04.6 LTS with an AMD Ryzen Threadripper 3990X 64-Core Processor. The MILP solver was Gurobi 10.0.2~\cite{gurobi}. The proposed models and algorithms are implemented in Python. The time limit for all test instances is set to 7200 seconds.  

\subsection{Comparison against baselines}\label{sec: compare}
\begin{table}[ht]
\caption{Comparison experiments of our algorithm, CG, and HC-VNS baselines. }
\label{tab: all}
\centering
\begin{tabular}{ccccccccc}
\toprule
& & & \multicolumn{2}{c}{CG} & \multicolumn{2}{c}{HC-VNS} & \multicolumn{2}{c}{Ours} \\ \cmidrule(r){4-5} \cmidrule(r){6-7}\cmidrule(r){8-9}
Maps & \#Customers & Lower Bound & Cost & Time & Cost  & Time & Cost  & Time \\ 
\midrule \\ 
\multirow{3}{*}{\begin{tabular}[c]{@{}c@{}}Manhattan \\ (Partial)\end{tabular}} 
& 5  & 0.52 & 3.69 & 727.81 & 0.87 & 0.93 & 0.88 & 0.07 \\
& 10 & 0.93 & 4.61 & 1259.44 & 1.51 & 1.69 & 1.41 & 0.22 \\
& 15 & 1.25 & 6.25 & 2026.17 & 1.98 & 2.43 & 1.77 & 0.24 \\
\\
\multirow{3}{*}{Manhattan} 
& 50 & - & - & - & 34.18 & 365.32 & 18.74 & 34.84\\
& 100 & - & - & - & 49.24 & 788.74 & 24.04 & 90.82\\
& 150 & - & - & - & 59.70 & 1339.71 & 27.89 & 193.71\\
\\
\multirow{3}{*}{Boston} 
& 50 & - & - & - & 106.13 & 686.25 & 72.57 & 97.65\\
& 100 & - & - & - & 146.53 & 2570.23 & 90.54 & 504.10\\
& 150 & - & - & - & 176.93 & 5039.28 & 99.64 & 1285.22\\
\bottomrule
\end{tabular}
\end{table}

\textbf{Baselines.} To our knowledge, few existing methods address the problem of multiple trucks and multiple drones on road networks. Notable approaches include a column generation-based heuristic method~\cite{gao2023scheduling}, a hybrid genetic algorithm, and a hybrid particle swarm algorithm~\cite{lin2022discrete}. The column generation-based heuristic formulates the partition problem as the master problem and the routing problem for each truck group as the pricing problem, approximated via Variable Neighborhood Search (VNS). The other two methods failed to solve the problem within the time limit on the smallest map, so we omit their results.  

We also implement a VNS-based method incorporating the widely used Hill-Climbing (HC) algorithm~\cite{chinnasamy2022review}. This method assigns customers using the nearest-neighbor approach, initializes truck group routes with truck-only routes, and iteratively improves the current route by selecting the best route in its neighborhood. The neighborhood is defined by applying one of the following operations to the current route: (1) modifying a drone's takeoff/landing location; (2) reassigning a customer from truck to drone service; (3) swapping the visiting order of two consecutive customers.

\textbf{Datasets.} We evaluate the methods on datasets sampled from real-world road networks of three different scales. The smallest road network is derived from a fully connected component of the Manhattan road map~\cite{blahoudek2020qualitative}, containing 20 vertices. The medium-sized network is the complete Manhattan map, comprising 1024 vertices. The largest network is the Boston road map obtained from OpenStreetMap~\cite{OpenStreetMap}, encompassing 11,000 vertices.
From each road network, we generate three datasets with a fixed number of depots and varying numbers of customers:
\begin{enumerate}
    \item Small network: 2 depots ($|\mathcal{P}|=2$), with 5, 10, and 15 customers.
    \item Medium network: 5 depots ($|\mathcal{P}|=5$), with 50, 100, and 150 customers.
    \item Large network: 10 depots ($|\mathcal{P}|=10$), with 50, 100, and 150 customers.
\end{enumerate}
The Boston road network is more dense than the Manhattan road network. The same set radius in each phase covers more nodes in Boston than in Manhattan, which makes it much harder even if the number of customers is the same. 

\textbf{Parameters.}  We set the number of drones on each truck to be 2, 3, and 4 on the Manhattan Partial, Manhattan, and Boston. The truck speed is set to be 30 km/h and the drone speed to be 48 km/h as \citet{gao2023scheduling}. The drone delivery distance limit $r$ is set at 1.5 km.

\textbf{Results.} For smaller problem instances, our proposed method demonstrates near-optimal performance. It achieves results that are remarkably close to the optimal solution, particularly compared to the Column Generation (CG) approach. This indicates the method's effectiveness in handling compact problem spaces. Also, our method exhibits exceptional speed across all dataset sizes. It consistently outperforms all baseline algorithms regarding execution time, making it particularly suitable for time-sensitive tasks for truck-drone delivery systems. Furthermore, as the problem size increases, the advantages of our method become more pronounced. It maintains its computational efficiency and surpasses all baseline approaches in solution quality. This scalability is crucial for addressing real-world applications, which sometimes contain hundreds of orders to deliver.

\begin{figure}[t]
    \centering
    \subfigure[Fix the number of depots $|\mathcal{P}|$]{
    \includegraphics[width=0.3\linewidth]{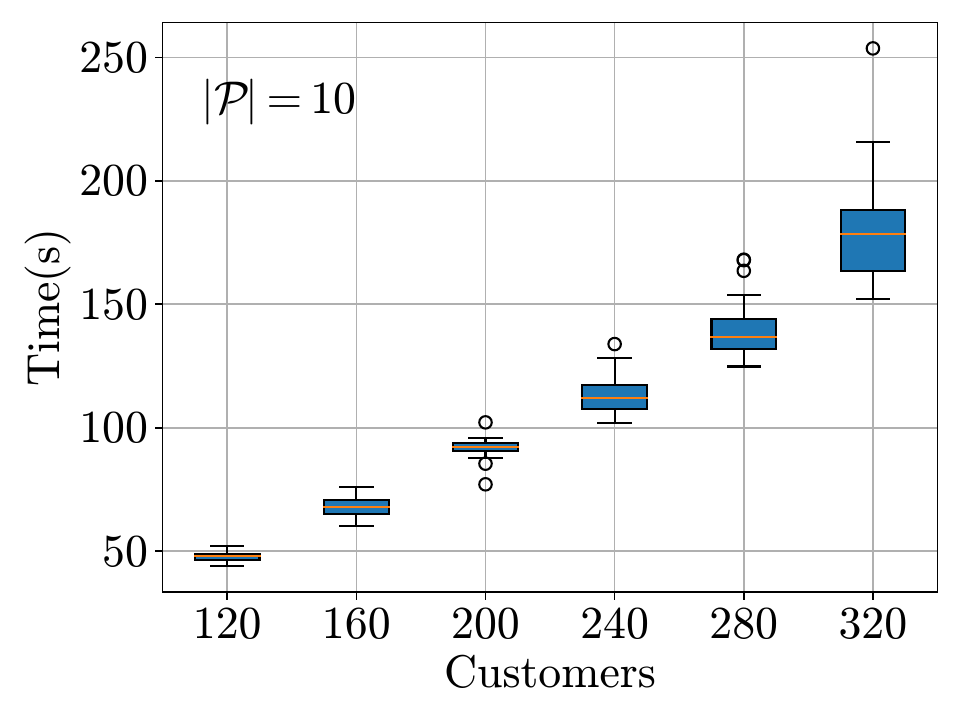}
    \label{fig: customer}
    }
    \subfigure[Fix the number of customers $|\mathcal{C}|$]{
    \includegraphics[width=0.3\linewidth]{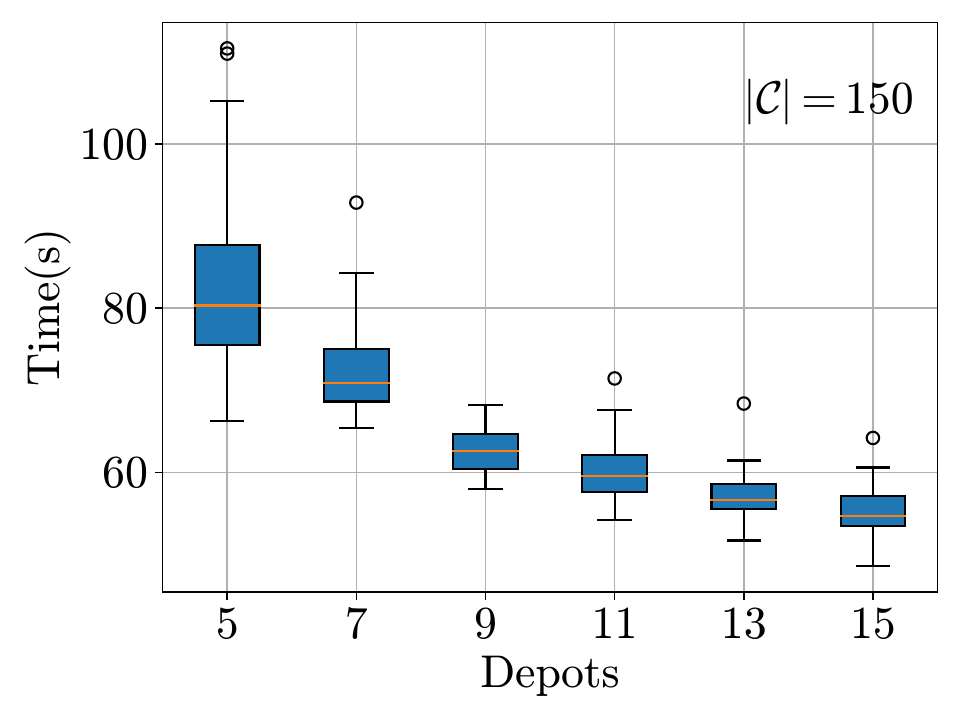}
    \label{fig: depot}
    }
    \subfigure[Fix the customer-depot ratio $|\mathcal{C}|/|\mathcal{P}|$]{
    \includegraphics[width=0.3\linewidth]{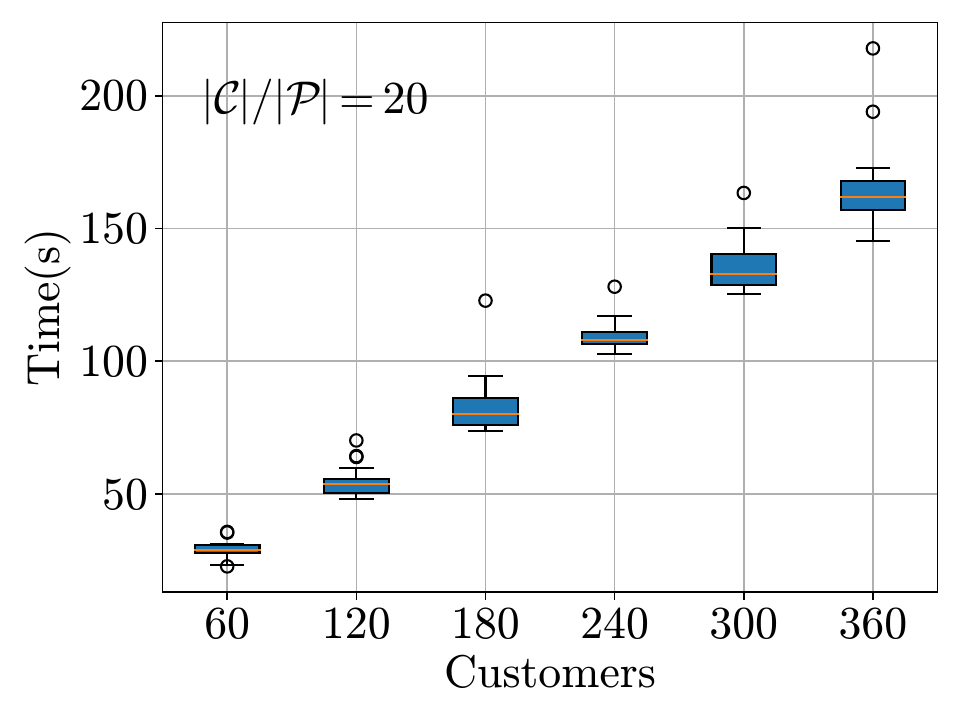}
    \label{fig: rate}
    }
    \caption{Box plots of scalability experiments.}
\end{figure}

\subsection{Scalability analysis}\label{sec: scale}
In this section, we evaluate the scalability of our proposed methods. For this analysis, we employ the set-NN algorithm in Phase 1 and the accelerated set-TSP algorithm in Phase 2. Our scalability tests involve varying two key parameters: the number of customers $|\mathcal{C}|$ and the number of depots (truck groups) $|\mathcal{P}|$. We conduct three distinct experiments: (1) fixed number of depots $|\mathcal{P}|$ with an increasing number of customers $|\mathcal{C}|$; (2) fixed number of customers $|\mathcal{C}|$ with an increasing number of depots $|\mathcal{P}|$; (3) fixed ratio of customers to depots $|\mathcal{C}|/|\mathcal{P}|$, simultaneously increasing both $|\mathcal{C}|$ and $|\mathcal{P}|$. 

The results of these experiments are presented in \autoref{fig: customer}, \autoref{fig: depot}, and \autoref{fig: rate}. Our findings demonstrate that the algorithm exhibits good scalability and can handle problems with up to 300 customers. The computational time patterns observed in each scenario are as follows: (1) when the number of depots is fixed, the computational time increases exponentially with the number of customers; (2) with a fixed number of customers, the computational time approximately follows a $1/{|\mathcal{P}|}$ relationship; (3) when maintaining a constant $|\mathcal{C}|/|\mathcal{P}|$ ratio, the computational time grows almost linearly as both $|\mathcal{C}|$ and $|\mathcal{P}|$ increase. These results indicate that the primary bottleneck in our pipeline lies in the second phase, specifically in solving the Set-TSP problem. This insight provides valuable direction for future optimizations and improvements to enhance the algorithm's overall performance and scalability.

\subsection{Effectiveness of set-based methods}\label{sec: ablation}
\begin{table}[ht]
  \caption{Results averaged over 100 instances sampled uniformly from Manhattan per scenario. Minimum cost marked bold.}
  \label{tab: theta}
  \centering
  \begin{tabular}{cccccccc}
    \toprule 
    && \multicolumn{2}{c}{$|\mathcal{P}|=5$, $|\mathcal{C}|=50$} &\multicolumn{2}{c}{$|\mathcal{P}|=5$, $|\mathcal{C}|=100$} &\multicolumn{2}{c}{$|\mathcal{P}|=5$, $|\mathcal{C}|=150$}\\
    \cmidrule(r){3-4}  \cmidrule(r){5-6}\cmidrule(r){7-8}
    Phase 1 & Phase 2 & Cost & Time(s) & Cost & Time(s) & Cost & Time(s) \\
    \midrule
    \multirow{2}{*}{NN} & TSP & 21.89 & 26.18 & 28.89 & 60.64 & 33.78 & 99.01\\
    & Set-TSP & 21.17 & 28.49 & 27.08 & 67.45 & 30.85 & 115.01\\
    \\
    \multirow{2}{*}{MST} & TSP & 19.51 & 27.05 & 25.82 & 62.59 & 30.94 & 100.80\\
    & Set-TSP & 19.00 & 29.95 & 24.21 & 72.18 & 28.05 & 126.87\\
    \\
    \multirow{2}{*}{Set-NN} & TSP & 21.38 & 26.52 & 28.31 & 61.17 & 33.01 & 99.31\\
    & Set-TSP & 20.55 & 31.14 & 26.15 & 77.02 & 29.78 & 141.29\\
    \\
    \multirow{2}{*}{Set-MST} & TSP & 19.43 & 27.09 & 25.71 & 62.35 & 30.87 & 100.83\\
    & Set-TSP & \textbf{18.74} & 34.84 & \textbf{24.04} & 90.82 & \textbf{27.89} & 193.71\\
    \bottomrule
  \end{tabular}
\end{table}
In this section, we evaluate the effectiveness of our set extension approach. We compare different methods in both phases of our algorithm. For the first phase, we examine the nearest neighbor (NN) method, the minimum spanning tree (MST) method, and their set extensions. In the second phase, we compare the standard and set TSP. 

To validate these methods, we tested all combinations of first and second-phase algorithms on three datasets. Each dataset contains 100 instances sampled uniformly from the Manhattan road network, consisting of 5 depots and 50, 100, or 150 customers.

The results of our analysis are presented in \autoref{tab: theta}. Our key findings are as follows. 

The combination of Set-MST and Set TSP produces the best cost but also requires the most computational time. Set TSP consistently provides lower costs than standard TSP across all first-phase methods, though it contributes the most to the increased computational time. This advantage of Set TSP over standard TSP becomes more pronounced as customers increase.

We found that MST outperforms NN in both its original and set versions. However, MST incurs higher computational costs. This is likely due to the more unbalanced allocation in the MST-based method, which leads to higher computational demands in the second phase due to the exponential time complexity.

Interestingly, when the second phase uses standard TSP, employing set-based methods in the first phase doesn't necessarily lead to better performance than their original counterparts. However, when Set TSP is used in the second phase, using set-based methods in the first phase does reduce the total cost.

These results highlight our algorithm's complex interplay between different methods and phases. They demonstrate the potential benefits of set-based approaches, particularly when used consistently across both phases, while also illustrating the trade-offs between solution quality and computational requirements.

\subsection{Effectiveness of acceleration strategies}\label{sec: acc}
\begin{table}[ht]
  \caption{Results averaged over 100 instances sampled uniformly from Manhattan per scenario. - means unable to solve within 7200 seconds.}
  \label{tab: acc}
  \centering
  \begin{tabular}{ccccccc}
    \toprule 
    & \multicolumn{2}{c}{$|\mathcal{P}|=5$, $|\mathcal{C}|=50$} &\multicolumn{2}{c}{$|\mathcal{P}|=5$, $|\mathcal{C}|=100$} &\multicolumn{2}{c}{$|\mathcal{P}|=5$, $|\mathcal{C}|=150$}\\
    \cmidrule(r){2-3}  \cmidrule(r){4-5}\cmidrule(r){6-7}
    Method & Cost & Time(s) & Cost & Time(s) & Cost & Time(s) \\
    \midrule
    No acceleration & 18.63 & 3668.46 & - & - & -& -\\
    Remove overlap & 18.69 & 71.15 & 23.93 & 129.50 & 27.77 & 227.21\\
    Remove inner vertices & 20.15 & 53.88 & 29.63 & 146.67 & 37.75 & 382.56\\
    Remove both & 18.74 & 34.84 & 24.04 & 90.82 & 27.89 & 193.71\\
    \bottomrule
  \end{tabular}
\end{table}

In this section, we evaluate the effectiveness of our acceleration strategies. For our experiments, we use Set-MST as the first phase algorithm and test four configurations in the second phase: no acceleration, strategy one only, strategy two only, and combined strategies. We conduct these experiments using the same datasets as in the previous section.

The results of our analysis are shown in \autoref{tab: acc}. When comparing the non-accelerated algorithm to the version applying both acceleration strategies, we observe only a 0.59\% sacrifice in solution quality with a significant speed-up of more than 100 times. By comparing strategies 1 and 2, we can see that most of this speed improvement is attributed to removing overlap. When applied alone, the strategy of eliminating inner vertices does not perform as well as expected. This is due to the overlap issue, which requires the truck to move back and forth between boundaries. As a result, it only becomes beneficial when no overlap present exists.

\begin{figure}[t]
    \centering
    \subfigure[Relative speed $s_{\text{dr}}/s_{\text{tr}}$]{
    \includegraphics[width=0.46\linewidth]{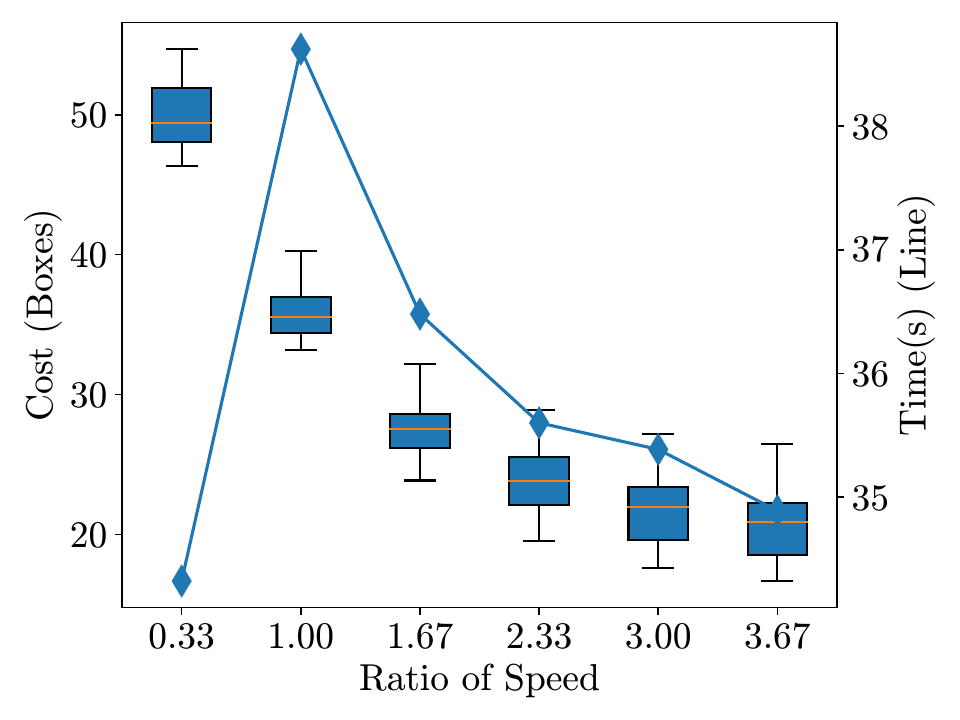}
    \label{fig: speed}
    }
    \subfigure[Distance limit $r$]{ 
    \includegraphics[width=0.46\linewidth]{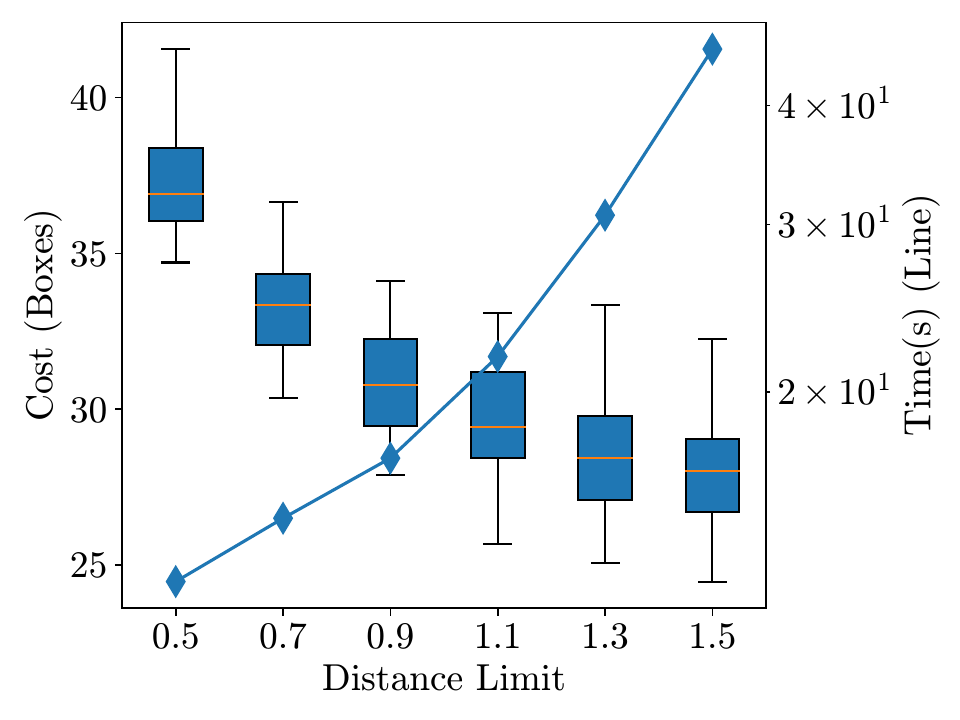}
    \label{fig: limit}
    }
    \caption{The influence of drone speed and distance limit. Time in Fig. (b) is plotted on a logarithm scale.} 
\end{figure}

\subsection{Sensitivity analysis}\label{sec: factor}
In this part, we do the sensitivity analysis for our algorithm on several important parameters for drones, including the drone speed, the drone flying distance limit, and the number of drones. 

\textbf{Drone speed.} We vary the drone speed $s_{\text{dr}}$ from 10 km/h to 110 km/h on the Manhattan map, using 100 instances with five depots and 100 customers. The results, shown in \autoref{fig: speed}, show that the cost almost follows a decrease proportional to $1/s_{\text{dr}}$. This indicates that even when drones are not significantly faster than trucks, the truck group still benefits from equipping drones due to their ability to fly freely without road network restrictions. We also observe a peak in running time as drone speed increases, suggesting that when drone and truck speeds are similar, more pairs of vertices become comparable, making it harder to prune options when solving the MILP. 

\textbf{Distance limit for drones.} We vary the distance limit $r$ from 0.5 km to 1.5 km using the same dataset as in the drone speed experiments. Results shown in \autoref{fig: limit}indicate that the cost decreases sub-linearly as $r$ increases, which is attributed to the limited number of drones and their speed. The running time increases exponentially because the number of vertices on the boundary grows almost linearly with the limit $r$.

\textbf{Number of drones.} We test different numbers of drones $k\in\{0, 1,2,3,4,5\}$ with varying numbers of customers (from $|\mathcal{C}|=50$ to $|\mathcal{C}|=150$), using 100 instances for each scenario with a fixed number of depots $|\mathcal{P}|=5$. Results are presented in \autoref{tab: k}. As the number of drones increases, the route decreases. However, the percentage of cost decrease for each additional drone also diminishes, with faster diminishing observed for fewer customers. This is because the benefit of increasing the number of drones has an upper bound. To benefit from adding one more drone, there should be at least one customer remaining after dispatching all drones and before collecting any of them back, which requires an increasingly larger customer density. Hence, when building a truck group, the number of drones to equip should be carefully evaluated based on customer distribution.
\begin{table}[ht]
  \caption{Cost for different numbers of drones per truck on datasets sampling vertices from uniform distribution and Clusters. $\Delta$ evaluates the cost decrease of adding one drone.}
  \label{tab: k}
  \centering
  \begin{tabular}{cccccccccccc}
    \toprule 
    & $k=0$ & \multicolumn{2}{c}{$k=1$} & \multicolumn{2}{c}{$k=2$} & \multicolumn{2}{c}{$k=3$} & \multicolumn{2}{c}{$k=4$} & \multicolumn{2}{c}{$k=5$}\\
    \cmidrule(r){2-2} \cmidrule(r){3-4} \cmidrule(r){5-6} \cmidrule(r){7-8} \cmidrule(r){9-10} \cmidrule(r){11-12}
    $|\mathcal{C}|$  & Cost & Cost & $\Delta$ & Cost & $\Delta$ & Cost & $\Delta$ & Cost & $\Delta$ & Cost & $\Delta$ \\
    \midrule
    50 & 39.64 & 25.02 & 36.88\% & 22.50 & 10.07\% & 21.69 & 3.60\% & 21.30 & 1.80\% & 21.17 & 0.61\%\\
    70 & 43.57 & 27.04 & 37.94\% & 23.80 & 11.98\% & 22.29 & 6.34\% & 21.48 & 3.63\% & 21.02 & 2.14\%\\
    90 & 49.02 & 30.25 &38.29\% & 25.99 & 14.08\% & 24.05 & 7.46\% & 23.07 & 4.07\% & 22.64 & 1.86\%\\
    110 & 52.83 & 32.84 & 37.83\% & 28.25 & 13.98\% & 25.98 & 8.04\% & 24.82 & 4.46\% & 24.07 & 3.02\%\\
    130 & 56.38 & 35.90 & 36.32\% & 30.80 & 14.21\% & 28.35 & 7.95\% & 26.77 & 5.57\% & 25.88 & 3.32\%\\
    150 & 60.95 & 38.37 & 37.05\% & 32.92 & 14.20\% & 30.01 & 8.84\% & 28.33 & 5.60\% & 27.24 & 3.85\%\\
    \bottomrule
  \end{tabular}
\end{table}

This analysis provides insights into how these key parameters affect the performance and efficiency of our algorithm, helping to guide optimal configuration in various scenarios.

\section{Conclusion} 
\label{sec:conclusion}
In this paper, we have explored a new variant of the Flying Sidekick Traveling Salesman Problem (FSTSP) called MA-FSTSP. This variant is designed to address the complexities of path-finding over practical road networks and capitalize on drones' advantages over ground vehicles in real-world applications. We have proposed a mixed integer linear programming (MILP) model for this problem.

Recognizing that finding the optimal solution for this problem is not scalable, we have developed a novel 3-phase method. Extensive experiments demonstrate that this method efficiently produces high-quality solutions compared to baselines and can solve large-scale instances.

For future work, we identify two main directions. First, to enhance scalability, we propose extending heuristic algorithms like the Lin-Kernighan Heuristic (LKH) \cite{lin1973effective} to solve the Set-TSP in the second phase, replacing the current MILP approach. Second, to better characterize drone flexibility, we suggest expanding possible take-off and landing locations from vertices to arbitrary points along edges. This would transform the problem from discrete to continuous, allowing for more accurate routing of truck-drone delivery systems.

\appendix
\section{Time complexity analysis}
\label{app: tc}
Here, we state the complexity of our 3-phase method. Recall that $k$ is the number of drones, $m$ is the number of depots, $n$ is the number of customers, $\theta$ is the set radius we choose, and $\mathcal{S}_c(\theta)$ is the set of vertices we collected for customer $c$ in its neighborhood. Let $\mathcal B(\mathcal S_c(\theta)$ be the boundary of the set $\mathcal S_c(\theta)$.
\begin{lemma}\label{lem: num}
    $\mathcal{S}_c(\theta)$ contains $O(\theta^2)$ vertices in the set and $O(\theta)$ vertices on the boundary. 
\end{lemma}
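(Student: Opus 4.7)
\bigskip
\noindent\textbf{Proof proposal for \autoref{lem: num}.}

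The plan is to exploit the planarity of real-world road networks and the fact that $d(\cdot,\cdot)$ is a geometric (Euclidean) distance on the underlying 2D map. Under the standard modeling assumption that the vertex density of the road network is uniformly bounded, i.e., there exists a constant $\rho>0$ such that every Euclidean disk of radius $R$ contains at most $\rho\cdot R^2$ vertices of $\mathcal{G}$ (and the typical edge length is bounded by some constant $\ell$), both bounds follow from elementary area/perimeter estimates.

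First I would handle the set size. By definition, $\mathcal{S}_c(\theta)=\{n\in\mathcal{V}:d(n,c)\le\theta\}$ is the intersection of $\mathcal{V}$ with the closed Euclidean disk $B(c,\theta)$ of area $\pi\theta^2$. Applying the bounded-density assumption directly to this disk yields
\begin{equation*}
|\mathcal{S}_c(\theta)|\le \rho\pi\theta^2 = O(\theta^2).
\end{equation*}

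Next I would handle the boundary. A vertex $v\in\mathcal{S}_c(\theta)$ lies in $\mathcal{B}(\mathcal{S}_c(\theta))$ iff it has an incident edge $(v,u)\in\mathcal{E}$ with $u\notin\mathcal{S}_c(\theta)$. Since $d(u,c)>\theta$ and $d(v,u)\le\ell$, the triangle inequality gives $d(v,c)\ge\theta-\ell$. Hence every boundary vertex lies in the annulus $A=\{x:\theta-\ell\le d(x,c)\le\theta\}$, whose area is $\pi\theta^2-\pi(\theta-\ell)^2=2\pi\ell\theta-\pi\ell^2=O(\theta)$. Invoking the density bound on $A$ gives $|\mathcal{B}(\mathcal{S}_c(\theta))|=O(\theta)$.

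The mildly delicate step is the boundary count: it relies on the implicit assumption that edge lengths in $\mathcal{G}$ are uniformly bounded, which keeps the annulus thin (width $\ell$, area linear in $\theta$) rather than growing with $\theta$. If the paper instead allows arbitrarily long edges, one would need to argue via the maximum edge length that is \emph{actually} incident to vertices near the circle of radius $\theta$, which in a road network with non-crossing, bounded-degree roads is again $O(1)$ on average. I expect this is the only nontrivial piece; once the uniform-density and bounded-edge-length assumptions are fixed, both bounds reduce to purely geometric area/perimeter counts, and the $O(\theta^2)$ vs.\ $O(\theta)$ contrast is exactly the disk-area vs.\ circle-circumference distinction.
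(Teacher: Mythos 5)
Your proposal is correct and follows essentially the same geometric packing argument as the paper: the paper assumes a lower bound $l$ on inter-vertex distances (which yields exactly the uniform density bound you postulate) and bounds the set by the disk area $O(\theta^2)$ and the boundary by the circle circumference $2\pi\theta/l=O(\theta)$. The only cosmetic difference is in the boundary count, where you use a thin annulus of width equal to the maximum edge length together with the density bound, while the paper packs minimally separated points directly along the circle of radius $\theta$; both hinge on the same unstated regularity assumptions about the road network and give the same asymptotics.
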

\begin{proof}
    Suppose there is a lower bound $l$ for the distance between vertices on the graph; then each vertex has at most 6 neighbors within distance $l$. So the number of vertices we collect to form sets $|\mathcal{S}_\theta|\le 7\frac{\pi\theta^2}{\pi l^2}=O(\theta^2)$ grows in quadratic to the radius $\theta$. And the number of vertices on the boundary $|\mathcal{B}(\mathcal{S}_c(\theta))|\le 2\pi\theta/l=O(\theta)$.
\end{proof}
\begin{theorem}
    In phase 1, Set-NN has time complexity $O(mn\theta^2)$, Set-MST has time complexity $O(m^2+n^2(\theta^4+1))$.
\end{theorem}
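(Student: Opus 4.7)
The plan is to analyze each stage of Algorithm \ref{alg:snn} separately and then repackage the resulting costs into the additive form stated in the theorem. The only structural input I would use from the paper is Lemma \ref{lem: num}, which bounds $|\mathcal{S}_c(\theta)|=O(\theta^2)$ and $|\mathcal{B}(\mathcal{S}_c(\theta))|=O(\theta)$ for every customer $c$, together with $|\mathcal{S}_p|=1$ for every depot $p$ by construction. I will assume the precomputation on lines 2--4 of Algorithm \ref{alg:method} makes every $d(\cdot,\cdot)$ and $d^{\text{tr}}(\cdot,\cdot)$ query cost $O(1)$, so that Phase~1 only pays for edge-weight evaluations and the chosen partition routine.

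First I would handle Set-NN. The only edge weights the routine consults are the $mn$ depot-customer weights $w(p,c)=d^{\text{set}}(\mathcal{S}_c(\theta),\mathcal{S}_p)$, since nearest-neighbor partitioning never compares two customers directly. By \eqref{eq:set} one such minimum is taken over $|\mathcal{S}_c(\theta)|\cdot|\mathcal{S}_p|=O(\theta^2)\cdot 1=O(\theta^2)$ vertex pairs, so computing all weights costs $O(mn\theta^2)$. The subsequent scan of the $m\times n$ weight table to find each customer's argmin is $O(mn)$ and is absorbed, yielding the claimed $O(mn\theta^2)$.

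Next I would treat Set-MST, which needs the entire weighted graph $\mathcal{G}'$ on the $m+n$ nodes of $\mathcal{C}\cup\mathcal{P}$. The edge-weight computations split into three groups: customer-customer edges cost $O(\theta^4)$ apiece and sum to $O(n^2\theta^4)$; customer-depot edges cost $O(\theta^2)$ apiece and sum to $O(mn\theta^2)$; depot-depot edges cost $O(1)$ apiece and sum to $O(m^2)$. A standard MST-based partition on a dense graph of $m+n$ vertices (Prim with an array, say) adds $O((m+n)^2)$. Applying AM-GM twice,
\[
2mn\theta^2 = 2\cdot m\cdot (n\theta^2)\le m^2 + n^2\theta^4,\qquad 2mn\le m^2+n^2,
\]
the cross-terms and the MST cost both collapse into $O(m^2+n^2\theta^4+n^2)=O(m^2+n^2(\theta^4+1))$, exactly as stated.

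The argument is essentially a bookkeeping exercise once Lemma \ref{lem: num} is in hand, so I do not expect a substantive obstacle. The one step that warrants real care is the final repackaging: the mixed term $mn\theta^2$ and the MST cost $(m+n)^2$ must be dominated by $m^2+n^2(\theta^4+1)$ rather than left as separate summands, which is precisely what the two AM-GM inequalities above accomplish.
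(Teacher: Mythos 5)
Your proposal is correct and follows essentially the same route as the paper's proof: both invoke Lemma~\ref{lem: num} to bound the per-edge cost of $d^{\text{set}}$, split the Set-MST edge computations into customer--customer, customer--depot, and depot--depot groups, and add an $O((m+n)^2)$ MST construction cost. The only difference is cosmetic: you make explicit (via AM--GM) the absorption of the $mn\theta^2$ cross-term and of $(m+n)^2$ into $O(m^2+n^2(\theta^4+1))$, which the paper simply asserts, and you omit the final $O(m+n)$ tree-partitioning step, which is dominated in any case.
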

\begin{proof}
    In Set-NN, each customer-depot pair needs the computation of set distance, which in total is $O(mn\cdot (1\cdot |\mathcal{S(\theta)})|)=O(mn\theta^2)$ by \autoref{lem: num}. In Set-MST, $O(n^2\cdot |\mathcal{S}(\theta)|^2+mn\cdot|\mathcal{S}(\theta)|+m^2\cdot 1)=O(n^2\theta^4+m^2)$ computation time is needed to form the fully connected graph of customers and depots. Then, the minimum spanning tree can be obtained in $O((m+n)^2+(m+n)\log (m+n))=O(m^2+n^2)$. Finally, partitioning takes $O(m+n)$ by dynamic programming. The overall time complexity is $O(m^2 + n^2(\theta^4+1))$.
\end{proof}
\begin{theorem}
    Without acceleration strategies, the MILP for Set-TSP has $O(n^2(\theta^4+1))$ binary variables, $O(n^2)$ continuous variables, and $O(n^2 + n\theta^2)$ constraints. With acceleration strategies, the MILP reduces the number of binary variables to $O(n^2(\theta^2+1))$ and the number of constraints to $O(n^2+n\theta)$.
\end{theorem}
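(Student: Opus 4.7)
The plan is to bound the number of variables and constraints by a direct enumeration over the index sets of each family in the Set-TSP MILP \eqref{milp:tsp}--\eqref{milp:set}, then apply \autoref{lem: num} to convert set-size parameters into bounds in $\theta$.

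I would first partition the binary variables into three groups: the TSP-level $\beta_{c,c'}$ indexed by customer/depot pairs (trivially $O(n^2)$), the intra-set $\gamma_{u,v}$ with both indices in a single $\mathcal{S}_c(\theta)$, and the inter-set $\delta_{v,u}$ coupling two different sets. Without acceleration, \autoref{lem: num} gives $|\mathcal{S}_c(\theta)|=O(\theta^2)$, so a straightforward double count yields $\sum_c |\mathcal{S}_c(\theta)|^2 = O(n\theta^4)$ for $\gamma$ and $\sum_{c,c'} |\mathcal{S}_c(\theta)|\cdot|\mathcal{S}_{c'}(\theta)| = O(n^2\theta^4)$ for $\delta$. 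The $\delta$ term dominates, so the total collapses to $O(n^2(\theta^4+1))$. The continuous variables $y_{c,c'}$ are indexed purely by the high-level tour, giving $O(n^2)$ immediately. For the constraints I would walk through the TSP block \eqref{cstr:selfloop}--\eqref{cstr:consecutive}, observing that only \eqref{cstr:upper} has double customer indexing ($O(n^2)$) while the rest are $O(n)$, and the coupling block \eqref{cstr:visit}--\eqref{cstr:out}, where \eqref{cstr:align} is $O(n^2)$ and the synchronization constraints \eqref{cstr:in}--\eqref{cstr:out} contribute $\sum_c |\mathcal{S}_c(\theta)| = O(n\theta^2)$; summing these gives the claimed $O(n^2+n\theta^2)$.

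For the accelerated variant, the counting template is identical but the input set size changes. After overlap removal followed by inner-vertex removal, the effective set for customer $c$ reduces to $\mathcal{B}(\mathcal{S}_c(\theta))$, which has size $O(\theta)$ by the boundary bound in \autoref{lem: num}. Substituting $\theta$ for $\theta^2$ in the $\gamma$ and $\delta$ counts shrinks each by a factor $\theta^2$, producing $O(n^2(\theta^2+1))$ binary variables; the same substitution in the \eqref{cstr:in}--\eqref{cstr:out} contribution drops it from $O(n\theta^2)$ to $O(n\theta)$, yielding $O(n^2+n\theta)$ constraints overall. The continuous variable count is unaffected since it does not depend on set sizes.

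The only delicate point I expect is isolating the asymptotic dominator: it is the $\delta$ family alone that forces the $\theta^4$ (resp.\ $\theta^2$) scaling, so the argument must make explicit that removing inner vertices is precisely what knocks each $|\mathcal{S}_c(\theta)|^2$ factor down from $\theta^4$ to $\theta^2$. Overlap removal by itself does not improve the worst-case bound, since two disjoint non-overlapping sets can each still carry $\Theta(\theta^2)$ interior vertices; all of the asymptotic improvement therefore must be attributed to retaining only the boundary $\mathcal{B}(\mathcal{S}_c(\theta))$. Once this is observed, the remainder is routine bookkeeping.
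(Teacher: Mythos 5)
Your proposal is correct and follows essentially the same route as the paper's proof: enumerate the $\beta$, $\gamma$, and $\delta$ families (with $\delta$ dominating at $\sum_{c,c'}|\mathcal{S}_c(\theta)|\cdot|\mathcal{S}_{c'}(\theta)|=O(n^2\theta^4)$), count the $y_{c,c'}$ flow variables and the constraint blocks directly, and then substitute the boundary size $|\mathcal{B}(\mathcal{S}_c(\theta))|=O(\theta)$ from \autoref{lem: num} for the accelerated case. Your closing observation that the asymptotic gain comes entirely from retaining only the boundary (not from overlap removal) is a correct refinement the paper leaves implicit, but it does not change the argument.
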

\begin{proof}
    Without acceleration, the MILP for Set-TSP has $\sum_{p\in\mathcal{P}}\binom{|\mathcal{C}_p|+1}{2}=O(n^2)$ binary variables $\beta$,  $\sum_{c\in\mathcal{C}}|\mathcal{S}_c(\theta)|^2=O(n\theta^4)$ binary variables $\gamma$, and $\sum_{p\in\mathcal{P}}\sum_{c,c'\in\mathcal{C}_p}|\mathcal{S}_c(\theta)|\cdot|\mathcal{S}_{c'}(\theta)|=O(n^2\theta^4)$ binary variables $\delta$. So it has $O(n^2(\theta^4+1))$ binary variables in total. With acceleration strategies, each $\mathcal{S}_c(\theta)$ is replaced by its boundary $\mathcal{B}(\mathcal{S}_c(\theta))$, which gives $O(n^2(\theta^2+1))$ binary variables in total. The MILP also contains $O(n^2)$ continuous variables, the same as the binary variables $\beta$. For the constraints, the MILP has $O(\sum_{p\in\mathcal{P}_i}\sum_{c\in\mathcal{C}_p}(1 + \sum_{c'\in\mathcal{C}_p}1+\sum_{v\in\mathcal{S}_c(\theta)}1))=O(n^2+n\theta^2)$ constraints when not accelerated, and $O(\sum_{p\in\mathcal{P}_i}\sum_{c\in\mathcal{C}_p}(1 + \sum_{c'\in\mathcal{C}_p}1+\sum_{v\in\mathcal B(\mathcal{S}_c(\theta))}1))=O(n^2+n\theta)$ when accelerated. 
\end{proof}
\begin{theorem}\label{thm: complexity}
    The dynamic programming in phase 3 has time complexity $O(knr^6)$.
\end{theorem}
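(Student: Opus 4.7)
My plan is to bound the cost of the two dynamic programs separately (the $\textsc{Time}$ table and the $\textsc{Value}$ table), then add them. The single tool I need from the paper is \autoref{lem: num}, which says that every neighbor set $\mathcal{S}_c(r)$ contains $O(r^2)$ vertices. The crucial observation -- already made in the text right before the theorem -- is that we only ever have to instantiate a ``vertex'' argument with an element of some $\mathcal{S}_{\mathcal{O}^j}(r)$, because no drone dispatch, collection, or relevant intermediate hand--off can take place outside those sets without being dominated by the straight truck leg $d^{\text{tr}}(\cdot,\cdot)$. I would begin the write-up by stating this restriction explicitly, so that every subsequent ``$\sum_u$'' is a sum over $O(r^2)$ terms.

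\paragraph{Counting for $\textsc{Time}(u,v;s,t)$.}
First I would enumerate the states. The indices $s$ and $t$ range over $\{1,\dots,n\}$ and $\{1,\dots,k\}$ respectively, giving $O(nk)$ choices. For each fixed $(s,t)$, the argument $u$ is restricted to $\mathcal{S}_{\mathcal{O}^{s-1}}(r)$ and $v$ to $\mathcal{S}_{\mathcal{O}^{s+t-1}}(r)$, each of size $O(r^2)$ by \autoref{lem: num}, so the total number of states is $O(nk\cdot r^4)$. The transition in Eq.~\eqref{eq: transit} takes a maximum/min over $w\in\mathcal{S}_{\mathcal{O}^{s+t-2}}(r)$, another $O(r^2)$ factor; the base case in Eq.~\eqref{eq: init} is $O(1)$. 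Populating the entire $\textsc{Time}$ table therefore costs $O(nk\cdot r^4\cdot r^2) = O(nk\,r^6)$.

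\paragraph{Counting for $\textsc{Value}(s,v)$.}
Next I would repeat the analysis for the outer DP. The accelerated recurrence evaluates $\textsc{Value}(s,v)$ only for $v\in\mathcal{S}_{\mathcal{O}^{s}}(r)$, hence $O(n\cdot r^2)$ states. Each state scans the three free arguments $t\in\{0,\dots,k\}$, $u\in\mathcal{S}_{\mathcal{O}^{s-t}}(r)$, and $w\in\mathcal{S}_{\mathcal{O}^{s}}(r)$, for $O(k\,r^2\cdot r^2)=O(k\,r^4)$ work per state, assuming the look-up of $\textsc{Time}(u,w;s-t+1,t)$ and of $d^{\text{tr}}(w,v)$ is $O(1)$ after precomputation. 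This yields $O(n\,r^2\cdot k\,r^4)=O(nk\,r^6)$ for the outer DP.

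\paragraph{Combining.}
Adding the two contributions gives $O(nk\,r^6)+O(nk\,r^6)=O(nk\,r^6)$, which is exactly the claimed bound. The only real subtlety in the write-up is the justification that restricting vertex arguments to the relevant $\mathcal{S}_{\mathcal{O}^j}(r)$ is without loss of optimality; this is the step I expect to expand on most, since the rest is arithmetic on state-space sizes coming directly from \autoref{lem: num}. Assuming the distance tables $d$ and $d^{\text{tr}}$ are precomputed (as done in Lines 2--4 of \autoref{alg:method}), each individual recurrence evaluation is $O(1)$, so no hidden factor appears in either DP, and the stated bound $O(knr^6)$ is tight with respect to my counting.
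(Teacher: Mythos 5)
Your proposal is correct and follows essentially the same route as the paper: both decompose the analysis into counting the $\textsc{Time}$ states ($O(knr^4)$, each with an $O(r^2)$ transition over $w$) and the $\textsc{Value}$ states ($O(nr^2)$, each with an $O(kr^4)$ scan over $(t,u,w)$), invoking \autoref{lem: num} to bound each $\mathcal{S}_c(r)$ by $O(r^2)$ vertices and summing the two $O(knr^6)$ contributions. The only cosmetic difference is that you restrict $v$ in the $\textsc{Value}$ recurrence to $\mathcal{S}_{\mathcal{O}^{s}}(r)$ directly, whereas the paper writes the sum over $\cup_{c\in\mathcal{O}}\mathcal{S}_c(r)$ before asserting the same $O(|\mathcal{O}|r^2)$ count; your bookkeeping is, if anything, slightly cleaner.
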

\begin{proof}
    For each truck group route, given the visiting order of customers $\mathcal{O}$, $O(\sum_{s\le |\mathcal{O}|}\sum_{t\le k}\sum_{u\in\mathcal{S}_{\mathcal{O}^s}(r)}\sum_{v\in\mathcal{S}_{\mathcal{O}^{s+t-1}}(r)}1)=O(k|\mathcal{O}|r^4)$
    values of $\textsc{Time}$ needed to be computed, each maximizing over $O(\sum_{u\in\mathcal{S}_{\mathcal{O}^{s+t-2}}(r)})=O(r^2)$ variables. We also have to compute $O(\sum_{s\le |\mathcal{O}|}\sum_{v\in\cup_{c\in\mathcal{O}}\mathcal{S}_c(r)}1)=O(|\mathcal{O}|r^2)$ values of $\textsc{Value}$, each maximizing $O(\sum_{t\le k}\sum_{u\in\mathcal{S}_{\mathcal O^{s-t}}(r)}\sum_{w\in\mathcal{S}_{\mathcal{O}^{s}}(r)}1)=O(kr^4)$ terms. So the time complexity of the dynamic programming is $O(\sum_{p\in\mathcal{P}}k|\mathcal{O}_p|r^4\cdot r^2+|\mathcal{O}_p|r^2\cdot kr^4)=O(knr^6)$.
\end{proof}

\section{Computation of Lower Bounds}
\label{app: lb}
The lower bound for the optimal solution cost of MA-FSTSP is the optimal solution cost given $k=+\infty$ and $s_\text{dr}=+\infty$, which is equivalent to a Set MATSP:
\begin{problem}[\textbf{Set-MATSP}]
    Given a strongly connected directed graph $\mathcal{G}=(\mathcal{V}, \mathcal{E})$, and $m$ trucks starting from different depots in set $\mathcal{D}\subseteq\mathcal{V}$ to visit a set of $n$ customers $\mathcal{C}\subseteq\mathcal{V}$, where a visit to customer $c\in\mathcal{C}$ is defined
    as reaching a vertex $v\in S_c(\frac{r}{2})$, find a set of $m$ truck routes, one for each truck, to visit every customer $c\in\mathcal{C}$ at least once that minimizes the total cost. The cost of a truck route is the total weight of the edges it passes.
\end{problem}

\section*{Declaration of generative AI and AI-assisted technologies in the writing process}

During the preparation of this work the author(s) used Claude in order to improve language. After using this tool/service, the author(s) reviewed and edited the content as needed and take(s) full responsibility for the content of the publication.

\bibliographystyle{elsarticle-num-names} 
\bibliography{main}

\end{document}